\newtheorem{definition}{Definition}
\newtheorem{assumption}{Assumption}
\newtheorem{theorem}{Theorem}
\newtheorem{lemma}{Lemma}
\newcommand{\bsx}{\mathbf{x}}
\newcommand{\mbf}{\mathbf}
\newcommand{\sbf}{\boldsymbol}
\title{High dimensional Bayesian Optimization via Condensing-Expansion Projection}
\author{%
    Jiaming Lu \\
    ISTBI, Fudan University\\
    Shanghai, China \\
  % examples of more authors
   \And
  Rong J.B. Zhu\thanks{Corresponding: \texttt{rongzhu@fudan.edu.cn}} \\
  ISTBI, Fudan University\\
  Shanghai, China\\
  % Address \\
  % \texttt{email} \\
  % \AND
  % Coauthor \\
  % Affiliation \\
  % Address \\
  % \texttt{email} \\
  % \And
  % Coauthor \\
  % Affiliation \\
  % Address \\
  % \texttt{email} \\
  % \And
  % Coauthor \\
  % Affiliation \\
  % Address \\
  % \texttt{email} \\
}
\begin{document}

\maketitle

\begin{abstract}
In high-dimensional settings, Bayesian optimization (BO) can be expensive and infeasible. The random embedding Bayesian optimization algorithm is commonly used to address high-dimensional BO challenges. However, this method relies on the effective subspace assumption on the optimization problem's objective function, which limits its applicability. 
In this paper, we introduce Condensing-Expansion Projection Bayesian optimization (CEPBO), a novel random projection-based approach for high-dimensional BO that does not reply on the effective subspace assumption. The approach is both simple to implement and highly practical. 
We present two algorithms based on different random projection matrices: the Gaussian projection matrix and the hashing projection matrix. 
Experimental results demonstrate that both algorithms outperform existing random embedding-based algorithms in most cases, achieving superior performance on high-dimensional BO problems.
The code is available in \url{https://anonymous.4open.science/r/CEPBO-14429}.
%Bayesian optimization (BO) is currently a strategy for optimizing black-box functions. Traditional BO techniques, however, primarily address low-dimensional problems  and experience significant performance deterioration in high-dimensional (HD) settings. To tackle HDBO challenges, random embedding-based algorithms have been introduced.  Nevertheless, these strategies tend to depend, to varying degrees, on assumed conditions of the effective dimensionality, constraining their applicability. This paper introduces a novel random embedding-based technique: Compression-Expansion Projection (CEP). The CEP-based method records optimization trajectories in the original space,  effectively avoiding the dependence of effective dimensionality hypothesis. This paper designs two algorithms based on CEP, namely CEP-REMBO and CEP-HeSBO, according to the random Gaussian projection matrix and hashing random projection matrix. Experimental results show that both CEP-REMBO and CEP-HeSBO have achieved the best performance on HDBO problems, and they are superior to existing random embedding-based algorithms in most cases.
\end{abstract}

\section{Introduction}
For many optimization problems, the objective function $f$ lacks a closed-form expression, and gradient information is often unavailable, leading to what we are generally referred to as black-box functions \cite{jones1998efficient,snoek2012practical,shahriari2015taking}.
Bayesian optimization (BO) is an efficient method for solving such optimization problems by modeling the unknown objective function through a probabilistic surrogate model, typically a Gaussian Process. The BO routine is a sequential search algorithm where each iteration involves estimating the surrogate model from available data and then maximizing an acquisition function to determine which point should be evaluated next. 
%that balances exploration and exploitation based on uncertainty and optimality of the surrogate model.
As the input space dimension $D$ increases, typically $D\geq 10$,  BO encounters the so-called `curse of dimensionality' \cite{bellman1966dynamic}. This phenomenon refers to the  exponential increase in difficulty resulting from higher query complexity and the computational cost associated with calculating the acquisition function. 

To address the issue, numerous studies have proposed high-dimensional BO algorithms \citep{rembo,chen2012joint,binois2015warped,binois2020choice,hesbo,alebo} that typically translate high-dimensional optimizations into low-dimensional ones by various techniques, and search the new point in the low-dimensional space.  
However, these approaches can become inefficient when the maximum over the high-dimensional space cannot be well approximated by the maximum over the low-dimensional space.  

In this paper, we introduce a novel search strategy in high-dimensional BO problems called the Condense-Expansion Projection (CEP) technique, which is both simple to implement and highly practical. 
In each iteration of the sequential search, the CEP technique generates
a random projection matrix $\mathbf{A}\in \mathbb{R}^{d\times D}$, where $d\ll D$, to project the available data from the high-dimensional space to the low-dimensional embedding space by multiplying them with $\mathbf{A}$. It estimates the surrogate model and searches for the next point to evaluation in the low-dimensional embedding space. Subsequently, it projects the searched data point back to the high-dimensional space by multiplying it with $\mathbf{A}^{\top}$ for evaluation in the original space.  

We utilize two distinct random projection matrices: the Gaussian projection matrix \cite{DG:2002} and the hashing projection matrix \cite{RT:2008,BG:2013}, within the CEP technique, 
resulting in the development of two algorithms, CEP-REMBO and CEP-HeSBO. 
Our experimental results, comprising comprehensive simulation studies and analysis of four real-world datasets, demonstrate that both algorithms generally outperform existing random embedding-based algorithms, showcasting the superior performance of the CEP technique on high-dimensional BO problems.

\section{Related Work}
There is a substantial body of literature on high-dimensional BO.% which generally three approaches. 
The most closely related approach is REMBO  \cite{rembo} by fitting a Gaussian Process model in a low-dimensional embedding space obtained through a Gaussian random projection matrix. This approach has been further investigated by \cite{binois2015warped,binois2020choice,binois2015uncertainty,alebo} under various conditions. 
\cite{hesbo} proposed HeSBO that utilizes a hashing projection matrix. 
However, these studies are based on the assumption of an effective subspace, where a small number of parameters have a significant impact on the objective function. 
Similar to these studies, our approach evaluates the acquisition function over the embedding space. However, unlike these studies, our approach select the new point in the original space, implying that our approach does not relies on the effective space assumption. The second distinguishing aspect of our approach is that it generates a new random projection matrix in each iteration, thereby increasing the flexibility to accommodate the possibility that the global optimum may not be located in a single embedding space. 

Besides the embedding approach, two other techniques warrant consideration: one based on the additive form of the objective function, and the other based on the dropout approach.
\cite{kandasamy2015high} tackled the challenges in high-dimensional BO by assuming an additive structure for the function. 
Other works along the line include GPs with an additive kernel \cite{MK:2018,WLJK:2017} or cylindrical kernels \cite{BOCK}.
However, this approach is limited by its reliance on the assumption of the additive form of the objective function. 
\cite{li2018high} applied the dropout technique into high-dimensional BO to alleviate reliance on assumptions regarding limited “active” features or the additive form of the objective function.
This method randomly selects subset of dimensions and optimizes variables only from these chosen dimensions via Bayesian optimization. 
However, it necessitates ``filling-in"  the variables from the left-out dimensions. 
The proposed ``fill-in" strategy, which involves copying the value of variables from the best function value, may lead to being trapped in a local optimum, although the strategy is enhanced by mixing random values.

%%%%
\section{Method}
%\textcolor{red}{Unified the symbols, from $\boldsymbol{x}$ to $\mbf{x}$, from $^{T}$ to $^\top$. Modified the description of the theorem to make it look more unified.}
\iffalse %%%
Random embedding represents a straightforward yet potent approach within the realm of high-dimensional  Bayesian optimization; however, the current methods mainly rely on the assumption of effective dimension 
that is not invariably applicable. 
Addressing this limitation, this paper introduces an innovative paradigm of random embedding, 
termed as Condensing-Expansion Projection, which dispenses with the need for presuming effective dimensions 
and offers compatibility with a diverse array of projection techniques.
\fi %%%

\subsection{Bayesian Optimization}
We consider the optimization problem 
$$\mbf{x}^* = \arg \max\nolimits_{\mbf{x} \in \mathcal{X}}  f(\mbf{x}),$$ 
%on some bounded set $\mathcal{X}\subset \mathbb{R}^D$,  
where $f$ is a black-box function
and $\mathcal{X}\subset \mathbb{R}^D$ is some bounded set. 
%We assume gradients of f are unavailable. The box bounds on x specify the range of values that are reasonable or physically possible to evaluate.
BO is a form of sequential model-based optimization, where we fit a surrogate model, typically a Gaussian Process (GP) model, for $f$ that is used to identify which parameters $\mbf{x}$ should be evaluated next. 
The GP surrogate model is $f\sim \mathcal{GP}\left(m(\cdot), k(\cdot,\cdot)\right)$, with a mean function $m(\cdot)$ and a kernel $k(\cdot,\cdot)$. 
Under the GP prior, the posterior for the value of $f(\mbf{x})$ at any point in the space is a normal distribution with closed-form mean and variance. Using that posterior, we
construct an acquisition function $\alpha(\mbf{x})$ that specifies the utility of evaluating $f$ at $\mbf{x}$, such as Expected Improvement \cite{jones1998efficient}. 
We find $\mbf{x}_{\text{new}} = \arg \max\nolimits_{\mbf{x} \in \mathcal{X}}  \alpha(\mbf{x})$, and in the next iteration evaluate $f(\mbf{x}_{\text{new}})$.
However, GPs are known to predict poorly for large dimension $D$ \cite{rembo}, which prevents the use of standard BO in high dimensions.

\subsection{Condensing-Expansion Projection}
The framework of Condensing-Expansion Projection (CEP) is delineated through two essential projection procedures.
\begin{itemize}
  \item \textbf{Condensing Projection:} transpose points from the original space into a reduced-dimensional embedding subspace, where the surrogate model is fitted from available data and the acquisition function is maximized to determine which point should be evaluated next. 
  \item \textbf{Expansion Projection:} revert these points in the embedding subspace back to the original space, where the searched point is evaluated. 
\end{itemize}

\begin{figure*}[h]
    \centering
    \includegraphics[width=1\textwidth]{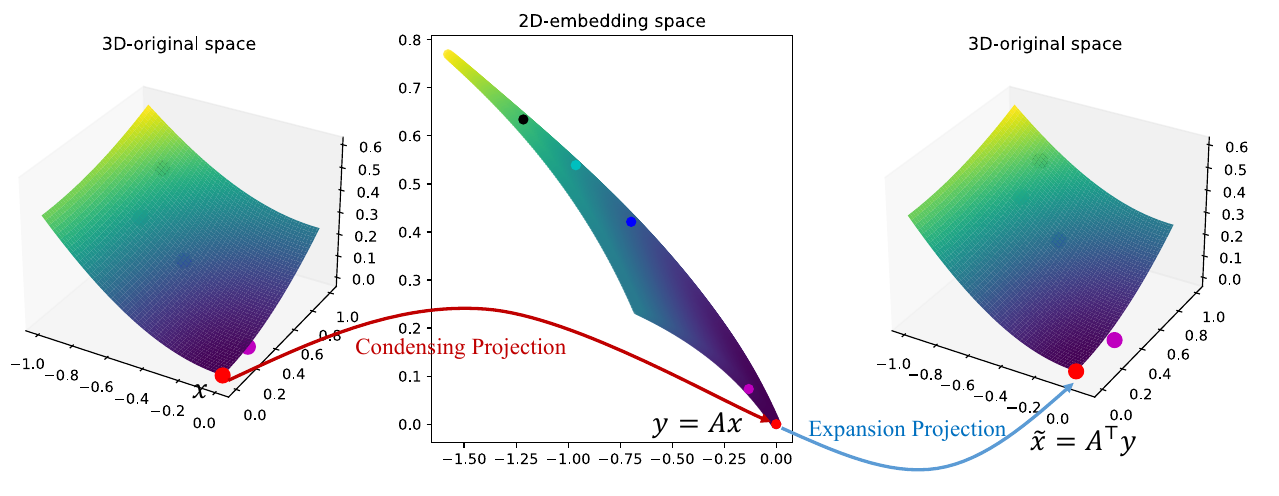}
    \caption{
    An illustration of CEP. The dimension of the original space is $D=3$, and the dimension of the embedding subspace is $d=2$. 
    The five points in the orginal space is projected to the embedding subspace by Condensing Projection, then they are projected back to the original space by Expansion Projection. 
    The optimal point (red dot) in the original space is still at the (approximately) optimal position after CEP.
    }
    \label{fig: cep illustraction}
\end{figure*}

Let us define an embedding subspace $\mathcal{Y} \subset \mathbb{R}^d$ of dimension $d$. 
We generate a random projection matrix $\mathbf{A} \in \mathbb{R}^{d\times D}$. 
Various methodologies exist for the construction of such a matrix, including the Gaussian random matrix, sparse random matrix \cite{dasgupta2013experiments,bingham2001random}, and the Subsampled Randomized Hadamard  Transform \cite{tropp2011improved}.
In this paper, we utilize the Gaussian random matrix and the Hashing matrix. 

Consider a point $\mbf{x} \in \mathcal{X}$ within the original space $\mathcal{X}$.
In the condensing projection, we project $\mbf{x}$ from the original space $\mathcal{X}$ to the embedding subspace $\mathcal{Y}$ by multiplying $\mbf{x}$ with the matrix $\mathbf{A}$, 
resulting in 
$$\mbf{y} = \mathbf{A} \mbf{x} \in \mathcal{Y},$$ 
thereby reducing the dimension from $D$ in the original space to $d$ in the embedding subspace. 
In the expansion projection, we project $\mbf{y}$ back to $\mathcal{X}$ by multiplying $\mbf{y}$ with the transposed matrix $\mathbf{A}^\top$, expressed as 
as 
$$\tilde{\mbf{x}} = \mathbf{A}^\top \mbf{y} = \mathbf{A}^\top\mathbf{A} \mbf{x}.$$
This completes the Condensing-Expansion Projection, 
which can be outlined as follows: 
transforming a point from the original space to the embedding subspace and then restoring it back to the 
original space, represented as 
$$\mbf{x} \rightarrow \mbf{y} \rightarrow \tilde{\mbf{x}}.$$
We outline an illustration in Figure\ref{fig: cep illustraction}.

%The CEP serves as a simple to implement and highly pratical concept that 
The CEP offers flexibility in selecting random projection matrix $\mathbf{A}$. 
In this paper, we focus on two types: Gaussian random matrices  
and Hashing random matrices. 
% to elucidate certain characteristics of CEP. 
%To continue, we shall delineate the conception of a Gaussian Random Matrix.

%\textbf{Type 1: Gaussian Random Matrix}\; 

\begin{definition}{(Gaussian Random Matrix)}
    Let $\mathbf{A} \in \mathbb{R}^{d \times D}$ be a random matrix with independent Gaussian entries.
    For any $1 \leq i \leq d$ and $1 \leq j \leq D$, the element $a_{ij}$ defined 
    as $$a_{ij} \sim \mathcal{N}\left(0,\frac{1}{d}\right).$$
    \label{random Gaussian matrix definition}
\end{definition}
\begin{definition}{(Hashing Random Matrix)}
    Let $\mathbf{A} \in \mathbb{R}^{d \times D}$ be a hashing random matrix. 
    Specifically,
    (1) Each column of $\mathbf{A}$ has a single non-zero element that is selected at random. (2) This non-zero element has an equal probability $p=0.5$ of being either $+1$ or $-1$.
    \label{hashing random projection matrix}
\end{definition}

\begin{theorem}
    Assume a matrix $\mathbf{A}$ 
    satisfying Definition \ref{random Gaussian matrix definition} or Definition \ref{hashing random projection matrix}, 
    it can be shown that
    $$\mathbb{E}\left[\mathbf{A}^\top\mathbf{A}\right] = \mathbf{I}_D.$$
    \label{expectation_properties}
\end{theorem}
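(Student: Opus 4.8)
The plan is to compute the $(j,k)$ entry of the matrix $\mathbf{A}^\top\mathbf{A}$ directly and take expectations entrywise, treating the Gaussian and Hashing cases separately since the two definitions specify the randomness differently. Writing $(\mathbf{A}^\top\mathbf{A})_{jk} = \sum_{i=1}^d a_{ij}a_{ik}$, the goal reduces to showing $\mathbb{E}[\sum_{i=1}^d a_{ij}a_{ik}] = \delta_{jk}$, i.e. that the expected value is $1$ on the diagonal ($j=k$) and $0$ off the diagonal ($j \neq k$). By linearity of expectation this is just $\sum_{i=1}^d \mathbb{E}[a_{ij}a_{ik}]$, so everything comes down to understanding $\mathbb{E}[a_{ij}a_{ik}]$ for a single row index $i$.

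For the Gaussian case I would argue as follows. The diagonal entries: when $j=k$, each $a_{ij}\sim\mathcal{N}(0,1/d)$ so $\mathbb{E}[a_{ij}^2] = \mathrm{Var}(a_{ij}) = 1/d$, and summing over the $d$ rows gives $d\cdot(1/d)=1$. For the off-diagonal entries $j\neq k$, the entries $a_{ij}$ and $a_{ik}$ are independent with mean zero, so $\mathbb{E}[a_{ij}a_{ik}] = \mathbb{E}[a_{ij}]\,\mathbb{E}[a_{ik}] = 0$, and the whole sum vanishes. This yields $\mathbb{E}[\mathbf{A}^\top\mathbf{A}] = \mathbf{I}_D$ immediately.

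For the Hashing case the key structural fact is that each \emph{column} of $\mathbf{A}$ has exactly one nonzero entry, equal to $\pm 1$ with probability $1/2$ each. I would first handle the diagonal: $(\mathbf{A}^\top\mathbf{A})_{jj} = \sum_{i=1}^d a_{ij}^2$, and since column $j$ has a single nonzero entry whose square is $(\pm 1)^2 = 1$ regardless of the sign, this sum equals exactly $1$ deterministically, hence its expectation is $1$. For the off-diagonal $j\neq k$, I consider $\mathbb{E}[a_{ij}a_{ik}]$. Here the natural move is to condition on the \emph{locations} of the nonzero entries in columns $j$ and $k$: the product $a_{ij}a_{ik}$ is nonzero only when both columns place their nonzero entry in the same row $i$. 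When that row-collision does not occur the product is $0$; when it does occur, the two signs are independent $\pm 1$ variables, so $\mathbb{E}[a_{ij}a_{ik}\mid\text{collision in row }i] = \mathbb{E}[\text{sign}_j]\mathbb{E}[\text{sign}_k] = 0$. Either way the conditional expectation of the product is zero, so the off-diagonal expectation vanishes.

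The main obstacle worth flagging is the \textbf{independence bookkeeping in the Hashing case}, specifically making rigorous that the sign assignments are independent of the random locations and independent across columns, so that one may factor $\mathbb{E}[\text{sign}_j\,\text{sign}_k]$ into the product of two zero-mean factors. Everything else is routine; the Gaussian case is essentially immediate from independence and the variance normalization $1/d$, and the diagonal of the Hashing case is deterministic. I would therefore spend the bulk of the write-up carefully stating the conditioning argument for the Hashing off-diagonal entries and noting that the $1/d$ normalization in the Gaussian definition is exactly what forces the diagonal to equal $1$ after summing the $d$ rows.
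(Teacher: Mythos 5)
Your proposal is correct and takes essentially the same route as the paper: both reduce $\mathbb{E}\left[\mathbf{A}^\top\mathbf{A}\right]$ to entrywise expectations, using mean-zero independence to kill the off-diagonal terms and the normalization (Gaussian variance $1/d$ summed over $d$ rows, or the single $\pm 1$ entry per column) to get $1$ on the diagonal. The only difference is bookkeeping in the hashing case: the paper factors $\mathbf{A}=\mathbf{S}\mathbf{D}$ into a location matrix and an independent diagonal sign matrix, whereas you condition on the nonzero locations directly; your version makes explicit the sign-independence step that the paper compresses into ``Obviously,'' and it bypasses the intermediate quantity $\mathbb{E}\left[(\mathbf{S}^\top\mathbf{S})_{ij}\right]$ entirely (whose stated value $1/d^2$ in the paper is in fact a harmless slip, the correct value being $1/d$).
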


The proof of Theorem \ref{expectation_properties} is provided in the appendix \ref{Proof of Property expectation_properties}. 
Theorem \ref{expectation_properties} represents the isometry in expectation, 
suggesting that the process of two projections, on average, preserves both the magnitude and direction of $\mathbf{x}$, thereby making the gap from $\tilde{\mathbf{x}}$ to  $\mathbf{x}$ in the original space is zero on average.

%In addition to the isometry in expectation, we provide the variance of $\mbf{y} = \mathbf{A}\mbf{x}$ in the following theorem. 
%Subsequent to the CEP, the expected value of a point $\bsx$ within the original space is maintained.

Besides of the isometry in expectation, we also concern the concentration of $\tilde{\mbf{x}}$ around the original point $\mbf{x}$ in terms of the function $f$. 
Assume that there exists a vector $\mbf{g}\in\mathbb{R}^D$ such that, for $\bsx_1, \bsx_2\in \mathcal{X}$ are within a neighborhood, i.e., $\bsx_2\in N_{\epsilon}(\bsx_1)$ where $N_{\epsilon}(\bsx_1)=\{\bsx\in \mathcal{X}: |\bsx-\bsx_1|<\epsilon\}$, $|f(\bsx_1)-f(\bsx_2)| \leq |\mbf{g}^{\top}(\bsx_1 - \bsx_2)|$. Here $\mbf{g}$ represents an upper bound on the rate of change of $f$ within the neighborhood.
Therefore the disparity between $f(\bsx)$ and $f(\tilde{\bsx})$ can be bounded by $|\mbf{g}^{\top}(\bsx_1 - \bsx_2)|$. 
The following theorem provides the concentration of $\mbf{g}^{\top}\tilde{\bsx}$ round $\mbf{g}^{\top}\bsx$.
%We are concerned with the difference between function values of $\bsx$ and $\tilde{\bsx}$ within the original space, essentially representing the disparity between $f(\bsx)$ and $f(\tilde{\bsx})$.
%Theorems \ref{Theorem of random Gaussian matrix Euclidean distance theorem}  and \ref{Theorem of Hashing Random Matrix Euclidean distance theorem} have demonstrated $|\mbf{g}^\top(\tilde{\bsx}-\bsx)| \leq \epsilon$.

\begin{theorem}
    (1) Assume a matrix $\mathbf{A}$ 
    satisfying Definition \ref{random Gaussian matrix definition},
    then for any vector $\bsx \in \mathcal{X}$ and any vector $\mbf{g} \in \mathbb{R}^D$, it holds that
    $$\mathbb{E}\left[(\mbf{g}^\top \mathbf{A}^\top\mathbf{A}\bsx-\mbf{g}^\top\bsx)^2\right] \leq \frac{3}{d}\|\mbf{x}\|^2\|\mbf{g}\|^2-\frac{1}{d}(\mbf{g}^\top\mbf{x})^2.$$
    (2) Assume a matrix $\mathbf{A}$ 
    satisfying Definition \ref{hashing random projection matrix},
    then for any vector $\bsx \in \mathcal{X}$ and any vector $\mbf{g} \in \mathbb{R}^D$, it holds that
    $$\mathbb{E}\left[(\mbf{g}^\top \mathbf{A}^\top\mathbf{A}\bsx-\mbf{g}^\top\bsx)^2\right] \leq 
    \frac{1}{d}\left(\|\mbf{g}\|^2\|\mbf{x}\|^2-\sum\limits_{i=1}^Dx_i^2g_i^2\right).$$
    \label{prop:Variance}
\end{theorem}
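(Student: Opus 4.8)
The plan is to lean on Theorem \ref{expectation_properties}. Since $\mathbb{E}[\mathbf{A}^\top\mathbf{A}]=\mathbf{I}_D$, we have $\mathbb{E}[\mathbf{g}^\top\mathbf{A}^\top\mathbf{A}\mathbf{x}]=\mathbf{g}^\top\mathbf{x}$, so in both parts the target quantity is exactly the variance of the scalar $W:=\mathbf{g}^\top\mathbf{A}^\top\mathbf{A}\mathbf{x}$. The unifying first move is a bilinear reduction: setting $\mathbf{u}=\mathbf{A}\mathbf{g}$ and $\mathbf{v}=\mathbf{A}\mathbf{x}$, we can write $W=\mathbf{u}^\top\mathbf{v}=\sum_{i=1}^d u_i v_i$, and because the $d$ rows of $\mathbf{A}$ are independent, the summands $u_iv_i$ are independent across $i$. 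Thus $\mathrm{Var}(W)=\sum_{i=1}^d\mathrm{Var}(u_iv_i)$, and everything collapses to a single per-row fourth-moment (variance of a scalar product) computation.

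For part (1) each row is Gaussian, so $(u_i,v_i)$ is a mean-zero bivariate Gaussian with $\mathrm{Var}(u_i)=\tfrac1d\|\mathbf{g}\|^2$, $\mathrm{Var}(v_i)=\tfrac1d\|\mathbf{x}\|^2$, and $\mathrm{Cov}(u_i,v_i)=\tfrac1d\,\mathbf{g}^\top\mathbf{x}$. I would then invoke Isserlis'/Wick's theorem to get $\mathbb{E}[u_i^2v_i^2]=\mathrm{Var}(u_i)\mathrm{Var}(v_i)+2\,\mathrm{Cov}(u_i,v_i)^2$, hence $\mathrm{Var}(u_iv_i)=\mathrm{Var}(u_i)\mathrm{Var}(v_i)+\mathrm{Cov}(u_i,v_i)^2$. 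Summing the $d$ identical independent terms yields the exact value $\mathrm{Var}(W)=\tfrac1d\big(\|\mathbf{g}\|^2\|\mathbf{x}\|^2+(\mathbf{g}^\top\mathbf{x})^2\big)$. The stated bound then follows from a single application of Cauchy--Schwarz, since $\tfrac1d(\|\mathbf{g}\|^2\|\mathbf{x}\|^2+(\mathbf{g}^\top\mathbf{x})^2)\le \tfrac{3}{d}\|\mathbf{x}\|^2\|\mathbf{g}\|^2-\tfrac1d(\mathbf{g}^\top\mathbf{x})^2$ is equivalent to $2(\mathbf{g}^\top\mathbf{x})^2\le 2\|\mathbf{g}\|^2\|\mathbf{x}\|^2$.

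For part (2) I would instead exploit the explicit collision structure of the hashing matrix. Writing $h(j)$ for the random row index of column $j$ and $\sigma_j$ for its random sign, one has $(\mathbf{A}^\top\mathbf{A})_{kl}=\sigma_k\sigma_l\,\mathbf{1}[h(k)=h(l)]$ with unit diagonal, so $W-\mathbf{g}^\top\mathbf{x}=\sum_{k\ne l} g_k x_l\,\sigma_k\sigma_l\,\mathbf{1}[h(k)=h(l)]$. Squaring and taking expectation, I would first average over the independent Rademacher signs: $\mathbb{E}[\sigma_k\sigma_l\sigma_{k'}\sigma_{l'}]$ is nonzero only when the four indices pair up, which under $k\ne l,\ k'\ne l'$ occurs in exactly two ways, the ``diagonal'' pairing $(k',l')=(k,l)$ and the ``transposed'' pairing $(k',l')=(l,k)$; then $\Pr[h(k)=h(l)]=1/d$ for $k\ne l$. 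The diagonal pairing produces $\tfrac1d\sum_{k\ne l}g_k^2x_l^2=\tfrac1d(\|\mathbf{g}\|^2\|\mathbf{x}\|^2-\sum_i g_i^2x_i^2)$, which is precisely the claimed expression, while the transposed pairing produces the cross term $\tfrac1d\sum_{k\ne l}g_kx_kg_lx_l=\tfrac1d\big((\mathbf{g}^\top\mathbf{x})^2-\sum_i g_i^2x_i^2\big)$. This cross term is exactly where I expect the main obstacle to lie: the sign/collision bookkeeping that isolates the surviving monomials is delicate, and, unlike the Gaussian case where Cauchy--Schwarz cleanly closes the gap, the transposed-pairing contribution is not sign-definite, so reaching the stated one-sided bound requires carefully arguing that this term can be discarded or absorbed. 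Establishing that control is the crux of the hashing case.
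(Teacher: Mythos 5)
Part (1) of your proposal is correct and, modulo packaging, is the paper's own proof: the paper also splits $\mathbf{g}^\top\mathbf{A}^\top\mathbf{A}\mathbf{x}$ over the $d$ independent rows, computes the same per-row fourth moment $\frac{1}{d^2}\bigl(2(\mathbf{g}^\top\mathbf{x})^2+\|\mathbf{g}\|^2\|\mathbf{x}\|^2\bigr)$ (via its Lemma \ref{lemma-gp}, where you use Isserlis), and closes with the same Cauchy--Schwarz step; your variance-of-independent-summands formulation is simply a tidier way of organizing the diagonal and cross terms.

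Part (2) is where the substance lies, and your instinct that the transposed pairing is the crux is exactly right --- but the conclusion is not that you are missing a clever absorption argument; it is that no such argument exists. Your computation gives the exact identity
\begin{equation*}
\mathbb{E}\bigl[(\mathbf{g}^\top\mathbf{A}^\top\mathbf{A}\mathbf{x}-\mathbf{g}^\top\mathbf{x})^2\bigr]
=\frac{1}{d}\Bigl(\|\mathbf{g}\|^2\|\mathbf{x}\|^2-\sum_{i=1}^{D}g_i^2x_i^2\Bigr)
+\frac{1}{d}\Bigl((\mathbf{g}^\top\mathbf{x})^2-\sum_{i=1}^{D}g_i^2x_i^2\Bigr),
\end{equation*}
and the second (transposed-pairing) term can be strictly positive: taking $\mathbf{g}=\mathbf{x}=(1,1,0,\dots,0)^\top$ makes the left side $4/d$ while the claimed bound equals $2/d$, for every $d$ and every $D\ge 2$. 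So the inequality in part (2) as stated is false. The paper ``proves'' it only because its Lemma \ref{lemma-CW} is itself in error: when expanding the square of the off-diagonal sum $\sum_{i\neq j}b_ib_j\,\sbf{m}_{1i}\sbf{s}_i^\top\sbf{s}_j\sbf{m}_{2j}^\top$, the proof retains only the index pairing $(i',j')=(i,j)$ (its term $E_3$) and silently drops the pairing $(i',j')=(j,i)$, which survives the sign average equally well ($\mathbb{E}[b_i^2b_j^2]=1$) and carries the same collision probability $1/d$; that omitted term is precisely the one you isolated. The repairable version of the statement is either the exact identity above or the weaker bound $\frac{2}{d}\bigl(\|\mathbf{g}\|^2\|\mathbf{x}\|^2-\sum_{i=1}^{D}g_i^2x_i^2\bigr)$, which follows from the identity via $(\mathbf{g}^\top\mathbf{x})^2\le\|\mathbf{g}\|^2\|\mathbf{x}\|^2$. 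Your analysis, completed with this counterexample and correction, is more careful than the paper's own treatment.
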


The proof of Theorem \ref{prop:Variance} is provided in the appendix \ref{Proof of Property Variance}. 
Theorem \ref{prop:Variance} states that the variance of $\mbf{g}^{\top}\tilde{\bsx}$ with respect to $\mbf{g}^{\top}\bsx$ diminishes as the dimension $d$ of the embedding subspace increases. 
Hence, CEP can maintain the function in the original space well when $d$ is not small. 
%$$\sigma^2\left( D^{-1}\|\mathbf{A}\bsx\|^2\right) \leq \frac{1}{D}\left(1+\frac{2}{d}\right).$$ This demonstrates that 
%While increasing the dimensionality $d$ of the projection subspace also diminishes the variance, the reduction is confined to $2/D$. 
%Consequently, the variance of the norm of $\mathbf{y}=\mathbf{A}\bsx$ in the CEP process tends to converge with respect to the norm itself.

\iffalse %%%%
Having established two principal properties of the CEP via random Gaussian matrices, the subsequent 
result shows that the transformed point $\tilde{\bsx}$ concentrates around the point $\bsx$ in the original space. 
\fi %%%%

\iffalse %%%
To facilitate subsequent demonstrations, two preliminary boundedness assumptions are introduced. 
Firstly, it is posited that the range of elements within the random matrix is bounded. 
Secondly, considering practical constraints, the search domain in the original space is assumed to be finite, 
thus implying that the vectors within the original space are bounded. 
These assumptions are generally reasonable and achievable in real-world scenarios.

\begin{assumption}
    Assume for any $1 \leq i \leq d, 1 \leq j \leq D$, the element $a_{ij}$ is bounded by $M$.
\end{assumption}

\begin{assumption}
    Assume for any $1 \leq j \leq D$, the variable $x_{j}$ is bounded by $x_{\max}$.
\end{assumption}
\fi  %%%

\iffalse %%%
\begin{theorem}
    %Without loss of generality, assume that for any $1 \leq j \leq D$, the absolute value of $x_{j}$ is bounded by 1.
    Let $\mathbf{A}$ be a $d\times D$ Gaussian Random Matrix 
    satisfying Definition \ref{random Gaussian matrix definition}.
    Consider any point $\bsx$ in the original space $\mathcal{X}$, 
    for any $\epsilon>0$, the point $\tilde{\bsx}$ obtained after the Condensing-Expansion Projection has a probability of at least 
    $$1-\frac{3}{d\epsilon^2}\|\mbf{x}\|^2\|\mbf{g}\|^2+\frac{1}{d\epsilon^2}(\mbf{g}^\top\mbf{x})^2$$
    %$$1 -2 \exp \left(-\frac{d \epsilon^2}{C_1D+C_2 d \sqrt{D}\epsilon}\right) - 2 \exp \left(-\frac{d\epsilon^2}{\frac{1}{2}(D^2-D) C_1+d\epsilon \sqrt{D^2-D} C_2}\right)$$ 
    to satisfy 
    $$|\mbf{g}^\top(\tilde{\bsx}-\bsx)| \leq \epsilon.$$
    %$$\|\tilde{\bsx}-\bsx\| \leq \epsilon.$$
    %Herein, $C_1=16 x_{\max }^2, C_2=\frac{4}{3} M^2 x_{\max}$.
    \label{Theorem of random Gaussian matrix Euclidean distance theorem}
\end{theorem}
\begin{proof}
The proof is proved in the appendix \eqref{sec:appG}. 
%The theorem is proved from Theorems \ref{expectation_properties_of_random_gaussian_matrix} \& \ref{prop:RandomGaussianMatrixVariance} by directly applying the Markov inequality. 
\end{proof}
\fi %%%

%%%%

\iffalse %%%%
This implies that a projected point $\tilde{\bsx}$ will likely be close to 
the true point $\bsx$ in the original space. Consequently, it is advantageous to record 
the optimization trajectory within the high-dimensional original space $\mathcal{X}$, as opposed to solely within an embedding space $\mathcal{Y}$. This allows for the exploitation of historical data from the original space to steer the optimization in the subspaces, effectively using multiple distinct subspaces to refine the search for the optimal solution in the original space.
\fi %%%%

\iffalse %%%%
\textbf{Validity in the Function Space.}\; 
We are concerned with the difference between function values of $\bsx$ and $\tilde{\bsx}$ within the original space, essentially representing the disparity between $f(\bsx)$ and $f(\tilde{\bsx})$.
Theorems \ref{Theorem of random Gaussian matrix Euclidean distance theorem} 
and \ref{Theorem of Hashing Random Matrix Euclidean distance theorem} 
have demonstrated $|\mbf{g}^\top(\tilde{\bsx}-\bsx)| \leq \epsilon$.
Assume that there exists a vector $\mbf{g}\in\mathbb{R}^D$ such that, for $\bsx_1, \bsx_2\in \mathcal{X}$ are within a neighborhood, i.e., $\bsx_2\in N_{\epsilon}(\bsx_1)$ where $N_{\epsilon}(\bsx_1)=\{\bsx\in \mathcal{X}: |\bsx-\bsx_1|<\epsilon\}$, $|f(\bsx_1)-f(\bsx_2)| \leq \mbf{g}^{\top}(\bsx_1 - \bsx_2)$, 
%$\|\tilde{\bsx}-\bsx\| \leq \epsilon^{1/2}$. 
%To extrapolate the outcome of $\|\tilde{\bsx} - \bsx\|$ onto $\vert f(\tilde{\bsx})-f(\bsx)\vert$, certain assumptions regarding the objecti function $f$ must be made. These presumptions typically pertain to innate characteristics of the function $f$, such as its Lipschitz continuity. 
%If the function $f$ satisfies Lipschitz continuity, meaning there exists a positive real constant $L$ such that, for all $\bsx_1, \bsx_2\in \mathcal{X}$, $|f(\bsx_1)-f(\bsx_2)| \leq L\|\bsx_1 - \bsx_2\|$, 
then Theorems \ref{Theorem of random Gaussian matrix Euclidean distance theorem} 
and \ref{Theorem of Hashing Random Matrix Euclidean distance theorem}  
imply the following inequality:  
$$\vert f(\tilde{\bsx})-f(\bsx)\vert \leq \epsilon$$
%$$\vert f(\tilde{\bsx})-f(\bsx)\vert \leq L\epsilon^{1/2}$$
holds with the probability stated in Theorem \ref{Theorem of random Gaussian matrix Euclidean distance theorem}
if the Gaussian random matrices are adapted, or with the probability stated in Theorem \ref{Theorem of random Gaussian matrix Euclidean distance theorem} if the hashing random matrices are adapted. 
Hence, CEP maintains validity in the function space. 
%Investigating this pathway presents an opportunity for deeper inquiry in future research.
\fi %%%

%%%%%%%%%%
\subsection{The CEPBO Algorithm}

\iffalse %%%
Let us consider an objective function $f: \mathcal{X} \rightarrow \mathbb{R}$, 
defined within $\mathcal{X} \subseteq \mathbb{R}^D$ and characterized as a black-box function. 
The optimization aim is to search an $\mbf{x}^* \in \mathcal{X}$ that maximizes $f(\mbf{x})$, i.e., 
$$
\mbf{x}^* = \underset{\mbf{x} \in \mathcal{X}}{\arg \max} \ f(\mbf{x}).
$$
Assume that the domain of the original space $\mathcal{X}$ is $[0,1]^D$, 
a condition which is invariably attainable through appropriate scaling.
\fi %%%%
We employ Condensing-Expansion Projection in Bayesian Optimization, leading to the development of the Condensing-Expansion Projection Bayesian Optimization (CEPBO) algorithm. 
In contrast to Random Embedding algorithms, sucha as REMBO\cite{rembo}, HeSBO\cite{hesbo} and ALEBO\cite{alebo}, where a fixed projection matrix is employed, 
the CEPBO algorithm dynamically generates a new projection matrix $\mathbf{A}_t$ during each iteration $t$. 
Through Condensing Projection, which condenses available points from the original space to a new embedding subspace via multiplication with $\mathbf{A}_t$, CEPBO leverages past information to conduct BO within the embedding subspace. It then determines which point to evaluate next within this subspace. 
Afterward, the selected point in the embedding subspace undergoes Expansion Projection, where it is projected back to the original space via multiplication with $\mathbf{A}_t^{\top}$. Subsequently, the objective function is evaluated at that point. 
This approach bypasses the stringent assumption associated with effective dimension by facilitating the projection of a new embedding subspace with each search iteration.
This adaptability acknowledges the possibility that the global optimum may not be confined to a single search. 

%This approach not only efficiently leverages historical data but also circumvents the stringent  assumption tied to effective dimension; allowing for continuous optimization across different subspaces even if the global optimum cannot be located within a single subspace. 
The detailed procedural flow of the algorithm is outlined in Algorithm\ref{CEPBO Algorithm}.
By using different random projection matrices at line 4 of the Algorithm\ref{CEPBO Algorithm}, we derive two algorithms: CEP-REMBO and CEP-HeSBO. These can be regarded as enhanced versions of REMBO and HeSBO, respectively.

\textbf{Condense original space into the embedding subspace.}\; 
The core concept of employing Condensing Projection entails the creation of a new subspace $\mathcal{Y}$ with each iteration, where BO is subsequently performed.
However, as the historical trajectories remains preserved within the original space $\mathcal{X}$, the newly formed subspace must be equipped with the requisite information to enable effective BO. 
%lacks the prior data for effective BO. 
To tackle this issue, the primary objective of Condensing Projection is to transfer the historical trajectories from the original space $\mathcal{X}$ into an embedding subspace $\mathcal{Y}$, thereby furnishing  
the embedding subspace $\mathcal{Y}$ with the necessary information to facilitate BO.
At the current iteration $t$, let $\mathcal{D}_{t-1}$ represent the trajectories in the original space $\mathcal{X}$, given by: 
$
\mathcal{D}_{t-1} = \{(\mbf{x}_1,f(\mbf{x}_1)), (\mbf{x}_2,f(\mbf{x}_2)), \ldots, (\mbf{x}_{t-1},f(\mbf{x}_{t-1}))\}.
$
During this iteration, a new projection matrix $\mathbf{A}_t$ of dimensions $\mathbb{R}^{d \times D}$ is sampled. 
This matrix serves as projecting the historical point from the original space $\mathcal{X}$ into a newly formed embedding subspace $\mathcal{Y}_t$:
$
\mathcal{D}_{t-1}^y = \{(\mathbf{A}_t\mbf{x}_1,f(\mbf{x}_1)), (\mathbf{A}_t\mbf{x}_2,f(\mbf{x}_2)), \cdots, (\mathbf{A}_t\mbf{x}_{t-1},f(\mbf{x}_{t-1}))\}.
$

\textbf{Optimize over the embedding subspace.}\; 
The objective $f$ is fitted by a Gaussian Process model over the embedding subspace $\mathcal{Y}_t$: $$f(\mbf{x})\approx \mathcal{GP}\left(m(\mathbf{A}_t\mbf{x}),k(\mathbf{A}_t\mbf{x},\mathbf{A}_t\mbf{x})\right).$$
Within the embedding subspace $\mathcal{Y}_t$, the dataset $\mathcal{D}_{t-1}^y$ informs the estimation of the hyperparameters $\theta_t$ for the Gaussian process, and the posterior probability of the Gaussian process is computed. 
The acquisition function $\alpha$ (such as Expected Improvement) identifies the embedding subspace's optimal point $\mbf{y}_t^*$ within the embedding subspace, which is represented by the equation
$
\mbf{y}_t^* = \underset{\mbf{y} \in \mathcal{Y}_t}{\arg \max} \ \alpha(\mbf{y} \,|\, \mathcal{D}_{t-1}^y).
$

\textbf{Project back and evaluate in the original space.}\; 
After searching the optimal point with the acquisition function, we need to project this point back to the original space $\mathcal{X}$ for objective function evaluation.  Subsequently, we add this point to the historical trajectories. 
%As direct evaluation of the objective function in the embedding subspace is infeasible, 
To be more specific, we use the transpose projection matrix $\mathbf{A}_t$ to map the optimal point $\mbf{y}_t^*$ from the embedding subspace back 
to $\mathcal{X}$ by applying its transpose $\mathbf{A}_t^\top$, expressed as: 
$
\mbf{x}_t^* = \mathbf{A}_t^\top\mbf{y}_t^*.
$
Subsequently, we evaluate the objective function at $\mbf{x}_t^*$ within $\mathcal{X}$ to obtain $f(\mbf{x}_t^*)$. This data, denoted as $(\mbf{x}_t^*, f(\mbf{x}_t^*))$, is then added to the historical trajectories $\mathcal{D}_{t-1}$, resulting in: 
$
\mathcal{D}_{t} = \mathcal{D}_{t-1} \cup \{(\mbf{x}_t^*, f(\mbf{x}_t^*))\}.
$
This completes a full iteration cycle of the CEPBO algorithm.

\begin{algorithm}[t]
    \SetAlgoLined
    \KwIn{Objective $f:\mathcal{X} \rightarrow \mathbb{R}$;
    Acquisition criterion $\alpha$; Original dimension $D$; Embedding dimension $d$; Initial points $t_0$;
    Evaluation trials $t_N$}
    \KwOut{Best point $\bsx \in \underset{\mathcal{X}}{\arg \max} \; f(\bsx)$}
    Uniformly sample $t_0$ points $\{\bsx_1, \bsx_2, \cdots, \bsx_{t_0}\}$ in the original space;

    Define $\mathcal{D}_0=\{(\bsx_1,f(\bsx_1)),(\bsx_2,f(\bsx_2)),\ldots,(\bsx_{t_0},f(\bsx_{t_0}))\}$;

    \While{$t_0 + 1 \leq t \leq t_N$}{
        Construct the projection matrix $\mathbf{A}_t \in \mathbb{R}^{d \times D}$ according to Gaussian projection matrix in the Definition \ref{random Gaussian matrix definition} or hashing projection matrix in the Definition \ref{hashing random projection matrix};
        
        Project the points in $\mathcal{D}_{t-1}$ onto the embedding subspace $\mathcal{Y}_t$ via $\mathbf{A}_t$, 
        obtaining the set of points in the embedding subspace $\mathcal{D}_{t-1}^y=\{(\mathbf{A}_t\bsx_1,f(\bsx_1)), (\mathbf{A}_t\bsx_2,f(\bsx_2)), \cdots, (\mathbf{A}_t\bsx_{t-1},f(\bsx_{t-1}))\}$\;
        
        Estimate the hyperparameters $\theta_t$ of the Gaussian Process prior for the given $\mathcal{D}_{t-1}^y$;
        
        Calculate the posterior probability of the Gaussian Process based on $\mathcal{D}_{t-1}^y$ and the estimated hyperparameters $\theta_t$.

        Compute the maximum of the acquisition criterion $\alpha$, $\mbf{y_t} \in \underset{\mbf{y}\in \mathcal{Y}}{\arg \max} \; \alpha(\mbf{y} \mid \mathcal{D}_{t-1}^y)$;

        Project $\mbf{y}_t$ back to the original space via $\mathbf{A}^\top_t$, obtaining $\bsx_t=\mathbf{A}^\top_t \mbf{y_t}$;
        
        Update the dataset $\mathcal{D}_{t} = \mathcal{D}_{t-1} \cup \{(\bsx_t,f(\bsx_t))\}$, and $t = t + 1$.
    }
    \caption{CEPBO Algorithm}
    \label{CEPBO Algorithm}
\end{algorithm}

\subsection{Address the boundary issue}
%\textbf{Convex Projections in the Original space and Subspace.}\; 
%In the previous text, we assume the domain of original space $\mathcal{X}$ to be $[0,1]^D$.
Our approach, akin to REMBO, encounters the issue of excessive exploration along the boundary of $\mathcal{X}$. 
To ensure the effective tuning of the acquisition function and to facilitate BO, 
it is crucial for the embedding subspace $\mathcal{Y}$ to have a bounded domain.
However, random projections between the original space of dimension $D$ and the embedding subspace of dimension $d$ can lead to points exceeding domain boundaries after CEP. These exceedances occur in two scenarios: $\mbf{y} = \mathbf{A} \bsx \notin \mathcal{Y}, \; \tilde{\bsx} = \mathbf{A}^\top \mbf{y} \notin \mathcal{X}.$ 

To mitigate the issue, we employ the convex projection of the original space, $P_{\mathcal{X}}$, and that of the 
embedding subspace, $P_{\mathcal{Y}}$, to rectify boundary transgressions. 
Specifically, the convex projection within the original space $\mathcal{X}$ is defined as follows: 
$$P_{\mathcal{X}}: \mathbb{R}^D \to \mathbb{R}^D,  \; P_{\mathcal{X}}(\tilde{\bsx})=\underset{\mbf{z} \in \mathcal{X}}{\arg \min} \|\mbf{z} -\tilde{\bsx}\| _2 .$$ 
Similarly, the convex projection within the embedding subspace $\mathcal{Y}$ is expressed as:  
$$P_{\mathcal{Y}}: \mathbb{R}^d \to \mathbb{R}^d,  \; P_{\mathcal{Y}}(\mbf{y})=\underset{\mbf{z} \in \mathcal{Y}}{\arg \min} \|\mbf{z} -\mbf{y}\| _2 .$$

%\textbf{Condense and Expansion Ratios.}\; 
Convex projection will lead to an issue where multiple distinctive values in the original space are mapped to identical boundary points within the embedding subspace, i.e.,   
for $\bsx_1 \neq \bsx_2$ such that $f(\bsx_1) \neq f(\bsx_2)$, the equality $P_{\mathcal{Y}}(\mathbf{A}\bsx_1) = P_{\mathcal{Y}}(\mathbf{A}\bsx_2)$ holds.  
Moreover, the substantial disparity between the dimensions $d$ and $D$ exacerbates the likelihood of such instances. This issue can undermine the precision of Gaussian process models and consequently, diminish the efficacy of optimization. 
To mitigate this risk, a scaling strategy is implemented within the Condensing Projection and Expansion Projection phases to diminish the probability of such occurrences. 
This involves scaling the projected points $\mathbf{A}\bsx$ using a reduction factor before applying convex projection, as follows: 
$$\mbf{y} = P_{\mathcal{Y}}\left(\frac{1}{\sqrt{D}}\mathbf{A}\bsx\right).$$ 
In a parallel procedure, the optimal points of the acquisition function in the embedding subspace $\mbf{y}$ undergo an inverse scaling: 
$$\tilde{\bsx} = P_{\mathcal{X}}(\sqrt{D}\mathbf{A}^\top \mbf{y}).$$ 
In this context, the scaling factors $\frac{1}{\sqrt{D}}$ and $\sqrt{D}$ confine the scope of projection within the viable domain. These factors are verified through empirical experimentation. 
%and can be adjusted relative to the dimensionality of the original space as demonstrated by empirical requirements.

\section{Experiments}
We conduct experiments to demonstrate the performance of the proposed method across various functions and 
real-world scenarios. 
In Section \ref{sec:num}, we evaluate its performance on three benchmark functions. 
In Section \ref{sec:real}, we assess it  across four real-world problems.  
These experimental results indicate that our algorithms, CEP-REMBO and CEP-HeSBO, achieve superior results.

%\textbf{Experimental Setup.}\; 
Because our approach, CEPBO, represents an advancement in the domain of the linear embeddings, our experiments focus on comparing it with other linear embeddings.  
We aim to assess the improvement achieved by applying CEP compared to REMBO\cite{rembo} and HeSBO\cite{hesbo}, respectively. ALEBO is considered to achieve state-of-the-art performance on this class
of optimization problems with a true linear subspace.
Therefore, we chose REMBO\cite{rembo}, HeSBO\cite{hesbo}, and ALEBO\cite{alebo} as the benchmark algorithms for our comparisons.

\subsection{Numerical Results}
\label{sec:num}
We evaluated the performance of the algorithms using the following benchmark functions: 
(1) the Holder Table function, (2) the Schwefel function, and (3) the Griewank function. 
Each function's input space was extended to a dimensionality of $D=100$. The Holder Table function has an effective dimension of $2$, whereas the Schwefel and Griewank functions are fully defined over the entire $D$-dimensional space. 
To enhance the complexity of the optimization task, we introduced random perturbations into both the Schwefel and Griewank functions; for further details, please refer to the appendix \ref{Modified Schwefel function and Griewank function}.
The goal is to find the minimum value of these functions. 
The number of initialization trials
for each algorithm was kept the same as the dimensionality of its embedding subspace. Each experiment is independently repeated 50 
times, with 50 evaluations per experiment. 
To assess the performance of the CEPBO algorithm under various embedding space dimensions, we take $d=2, 5, 10$ in the Holder Table function and $d=2, 5, 20$ in the Schwefel and Griewank functions. Since an effective dimension for the Schwefel and Griewank functions is 100, we prioritize the larger $d$ for assessment. In these experiments, we utilize expected improvement as the acquisition function. 
We have also conducted an experiment using the well-known Hartmann function with $D=1000$ as reported in the appendix \eqref{app:d1000}.  The results shows that the CEP projection can significantly improve the performance of both REMBO and HeSBO.

\begin{figure*}[ht]
    \centering
    \includegraphics[width=1\textwidth]{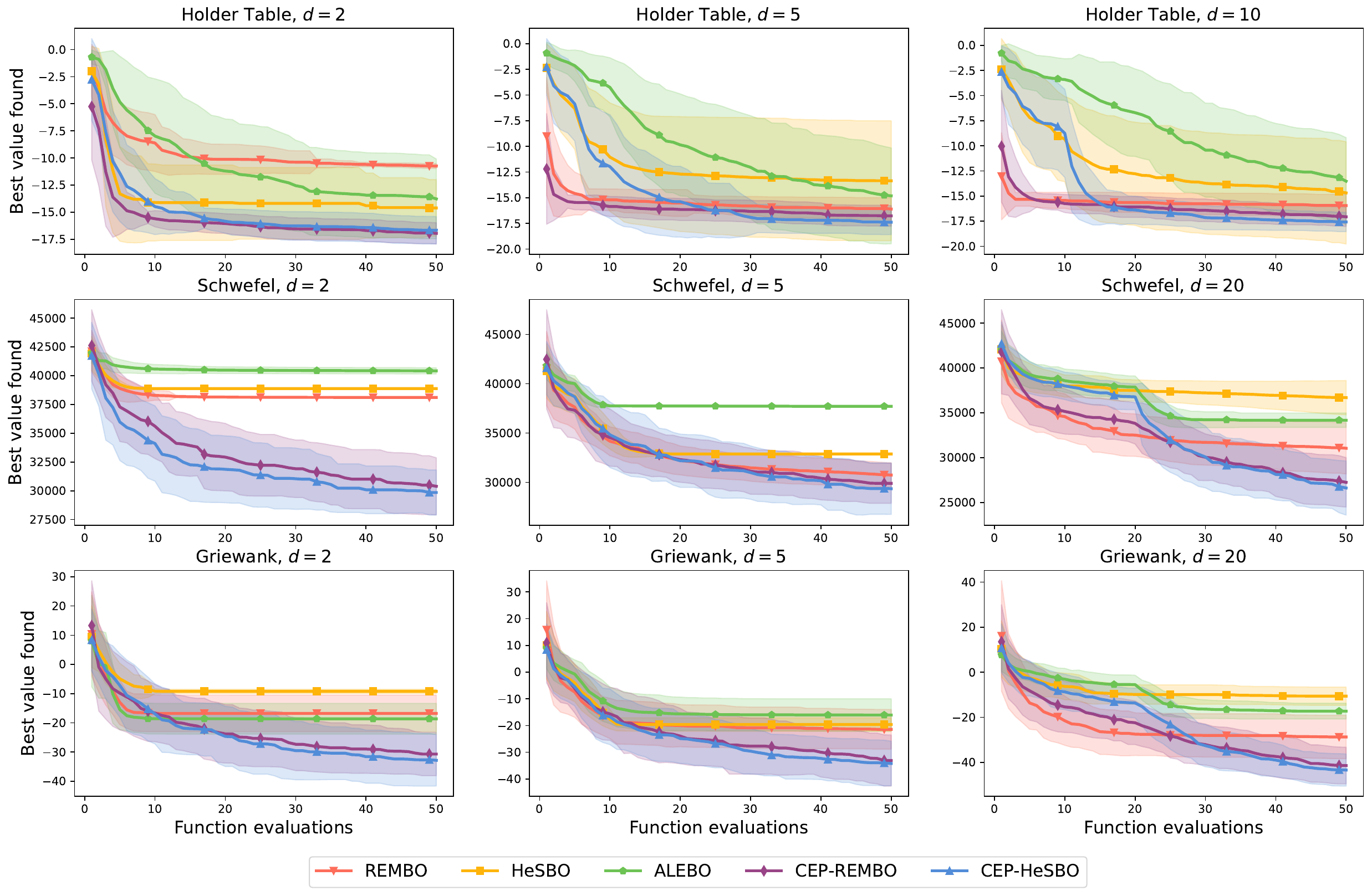}
    \caption{The results of the optimization experiments for three functions across various embedding dimensions. From top to bottom: Holder Table, Schwefel, and Griewank functions. 
    %The first row is the Holder Table function, the second row is the Schwefel function, the third row is the Griewank function, with each column representing a different embedding dimension.
    }
    \label{fig:ax}
\end{figure*}

%\textbf{Performance on the three functions.}\; 
We report the results in Figure \ref{fig:ax}. 
For the Schwefel and Griewank functions, where embedding dimensions are smaller than the effective dimension of 100, the baseline algorithms nearly ceased functioning, settling in local optima, which is visually depicted as a flat horizontal line on the corresponding graphs. 
Interestingly, even in the instance of the Holder Table function, where 
the embedding dimension met or exceeded the effective dimension, a circumstance where the baseline algorithms typically perform well, the approached algorithms continued to show superior performance over all baselines. 
Comparative the performance across a range of embedding dimensions $d$, the performance are similar and CEP-REMBO and CEP-HeSBO consistently surpassed all baseline algorithms. Therefore, the REMBO and HeSBO algorithms experienced substantial improvement with the integration  of the Condense-Expansion Projection mechanism.

\iffalse %%%
\textbf{Robustness to Embedding Dimension $d$.}\; 
To assess the performance of the CEPBO algorithm under various embedding space dimensions, we take $d=2, 5, 20$ in our experiments. 
%In our exploration of various embedding dimensions, 
we noticed negligible fluctuations in the performance of the proposed methods when the embedding dimension 
was greater than or equal to the effective dimensions. As the embedding dimensionality matched or exceeded 
the effective dimensions, we observed a progressive enhancement in performance of our proposed methods. 
A similar trend was detected in the baseline algorithms; however, CEP-REMBO and CEP-HeSBO consistently 
delivered the most commendable results.
\fi %%%%

\subsection{Real-World Problems}
\label{sec:real}
In this section, we evaluate the CEPBO algorithm on real-world optimization problems. 
The test cases consist of lunar landing task in the realm of reinforcement learning with $D=12$ \cite{eriksson2019scalable}, a robot pushing problem with $D=14$ \cite{wang2017batched}, a problem in neural architecture search with $D=36$ \cite{letham2020re}, and a rover trajectory planning problem with $D=60$ \cite{wang2018batched}. 
Algorithmic configurations and acquisition function selections strictly adhere to the  settings outlined in the original papers.
For additional details, please refer to the appendix.
The optimization goal is to maximize the reward function, and each experiment is independently repeated 10 times, with 500 evaluations per experiment.

\begin{figure*}[ht]
    \centering
    \includegraphics[width=1\textwidth]{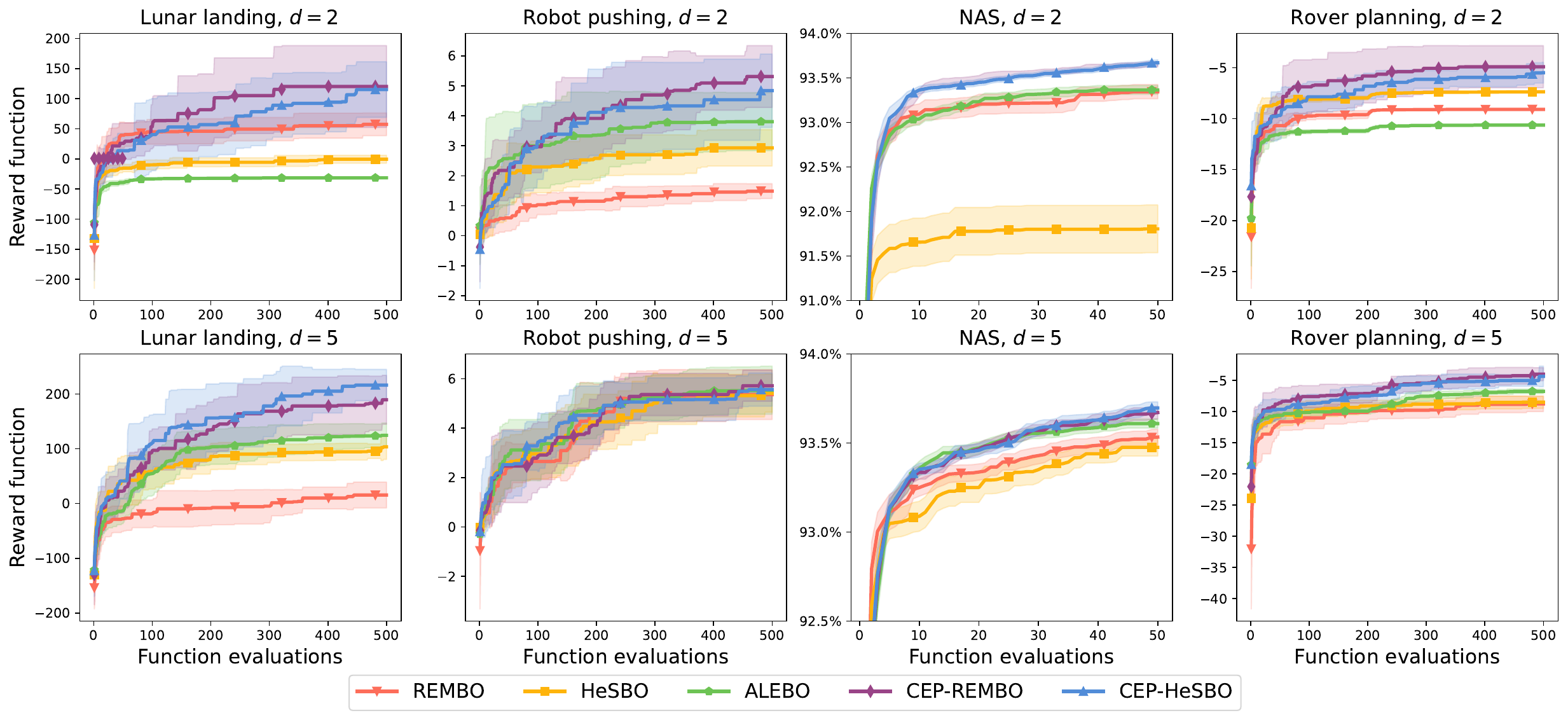}
    \caption{The results of the optimization experiments for four real-world scenarios.
    From left to right: Lunar landing, Robot pushing, NAS, and Rover planning. 
    }
    \label{fig:real}
\end{figure*}

\textbf{Lunar Landing.}\; This experiment entails the task of devising a reinforcement 
learning strategy for the lunar lander's control mechanism, aiming to minimize fuel 
consumption and proximity to the landing site while preventing a crash. The original space dimension is $D=12$. 
In the first column of Figure \ref{fig:real}, REMBO, HeSBO, and ALEBO algorithms become trapped in local optima to different extents. As the dimensionality of the embedding subspace $d$ increases from $d=2$ to $d=5$, notable performance improvements are observed for most algorithms, except REMBO. 
The introduced CEP-REMBO and CEP-HeSBO approaches consistently demonstrate the ability to enhance and identify novel optimal resolutions.

\textbf{Robot Pushing.}\; This scenario involves a robotics dual-arm manipulation 
task where the robot's arms are controlled by adjusting 14 modifiable parameters to push 
two objects while tracking their movement trajectories. 
The original space dimension is $D=14$. 
The second column of Figure \ref{fig:real} 
demonstrates that the proposed CEP-REMBO and CEP-HeSBO methods significantly 
outperform others when $d = 2$. When increasing to $d = 5$, all methods exhibit varying degrees of performance enhancement. This suggests that for optimization issues with moderate to low dimensionalities, escalating the dimensions of the embedding subspaces can notably bolster the algorithms' efficacy. Notwithstanding these improvements, the CEP-REMBO and CEP-HeSBO methods consistently maintain their leading positions.

\textbf{Neural Architecture Search (NAS).}\; The objective of this experiment is to identify an optimal 
architecture for neural networks, paralleling the methodology utilized by \cite{alebo}. Leveraging data 
from the NAS-Bench-101 benchmark \cite{ying2019bench}, we have developed an optimization problem focused on searching for a 
convolutional neural network architecture characterized by 36 dimensions. 
The original space dimension is $D=36$. 
In the third column of Figure \ref{fig:real}, at an embedding 
subspace dimension of $d=2$, the REMBO, HeSBO, and ALEBO algorithms rapidly converge to less than ideal solutions, hindering the exploration of superior neural network structures. 
On the contrary, the CEP-REMBO and CEP-HeSBO methods maintain the capability to persistently optimize, discovering architectures 
with improved accuracy. Increasing the subspace dimension to $d=5$ reveals the ALEBO's enhanced ability 
to perform on par with CEP-REMBO and CEP-HeSBO methods; however, CEP-HeSBO consistently exhibits the 
highest performance across all conditions.

\textbf{Rover Trajectory Planning.}\; This task involves optimizing a 2D
trajectory comprising of 30 pivotal points that collectively define a navigational path. 
The original space dimension is $D=60$. 
The fourth column of Figure \ref{fig:real} indicates that for an embedding 
subspace dimension of $d = 2$, the REMBO, HeSBO, and ALEBO algorithms can not successfully converge to an advantageous reward. In contrast, the CEP-REMBO and CEPHeSBO algorithms exhibit a capacity to consistently identify superior solutions. 
This pattern is similarly observed when the subspace dimension is increased to $d = 5$.

\section{Conclusion}
This paper proposes a Bayesian optimization framework utilizing the Condensing-Expansion Projection technique, free from reliance on the assumption of effective dimension. 
The primary concept involves employing projection twice within each iteration: first, projecting to an embedding subspace, and then projecting back to retain optimization information in the original high-dimensional space. 
The approach does not impose additional requirements on the projection matrix used, thereby significantly enhancing the applicability of the embedding-based Bayesian optimization algorithms. This flexibility enables the selection of a suitable projection matrix according to the problem's characteristics beforehand. 
This paper also theoretically verifies the properties of Condensing-Expansion Projection 
under random Gaussian matrix and hashing matrix, providing that the transformed point $\tilde{\bsx}$ concentrates around the point $\bsx$ in the original space. 
Finally, two new Bayesian optimization algorithms based on Condensing-Expansion Projection are proposed: CEP-REMBO and CEP-HeSBO based on the Gaussian projection matrix and the hash-enhanced projection matrix, respectively.   
Empirically, this paper conducts comprehensive experiments to assess the performance of the proposed algorithms across diverse optimization scenarios. 
The experimental results demonstrate that the Bayesian optimization algorithms based on Condensing-Expansion Projection achieved promising performance across these optimization functions, overcoming the reliance on effective dimension.

Our work has several limitations that can be addressed by future works. 
For instance, while our analysis supports the concentration of the transformed point around the original point after employing CEP, we acknowledge the absence of $\epsilon$-subspace embedding to preserve the mean and variance functions of a Gaussian process. Although reference to \cite{hesbo} suggests a potential avenue for such analysis, a more thorough examination is warranted. Another constraint of our study is the absence of an evaluation of CEP-based algorithms for optimization problems with billions of dimensions. Despite this potential, our approach lacks empirical validation, whereas REMBO has been shown to effectively address such challenges. 
\newpage

\bibliographystyle{unsrtnat}
\bibliography{main}

% 附录部分
\newpage
\section{Appendix}
\label{Appendix}
%\textcolor{red}{Unified the symbols, from $T$ to $^\top$}

\subsection{Proof of Theorem \ref{expectation_properties}}
\label{Proof of Property expectation_properties}
\begin{proof}
\iffalse %%%
    Consider that the $(i,j)$ entry of $\mathbf{A}^\top\mathbf{A}$ is given by $\sum_{k=1}^{d} a_{ik}a_{jk}$.
    For $i \neq j$, the independence of $a_{ik}$ and $a_{jk}$ implies
    $$\mathbb{E}\left[ a_{ik}a_{jk}\right]= \mathbb{E}\left[a_{ik}\right] \mathbb{E}\left[a_{jk}\right] = 0.$$
    Hence, we have
    $$\mathbb{E} \left[ \left(\mathbf{A}^\top\mathbf{A}\right)_{i,j} \right] = 0.$$
    In the case where $i = j$, the expectation of the $(i,i)$ entry computes to
    $$\mathbb{E}\left[ \left(\mathbf{A}^\top\mathbf{A}\right)_{i,i}\right] = \mathbb{E}\left[\sum_{k=1}^{d} a_{ik}^2\right] = \sum_{k=1}^{d}\mathbb{E}\left[a_{ik}^2\right].$$
    Given that the variance of $a_{ik}$ is
    $$\mathbb{E}\left[ a_{ik} ^2\right] = \sigma^2\left( a_{ik} \right) + \mathbb{E}\left[ a_{ik}\right] = \frac{1}{d},$$
    it follows that 
    $$\mathbb{E}\left[ \left(\mathbf{A}^\top\mathbf{A}\right)_{i,i}\right] = d \cdot \frac{1}{d} = 1.$$
\fi %%%%
(1) First, we consider the Gaussian Projection. Let $\mbf{A}=(\sbf{l}_1,\cdots,\sbf{l}_d)^\top$, such that $\sbf{l}_i=(\alpha_{i1},\cdots
,\alpha_{iD})^\top$ is a $D\times 1$ vector where each element is from $\mathcal{N}\left(0,\frac{1}{d}\right)$ distribution.
It is easy to show that, given any vector $\mbf{x}\in\mathcal{X}$ and any vector $\mbf{g}\in\mathbb{R}^D$,
\begin{equation}\label{A-unbiased}
    \mathbb{E}(\mbf{A}^\top\mbf{A})=\sum\limits_{i=1}^d\mathbb{E}\left(\sbf{l}_i\sbf{l}_i^\top\right)=\mbf{I}.
\end{equation}
(2) Second, we consider the Hashing Random Projection. 
For the Hashing Random Matrix, we rewrite 
 $\mbf{A} = \mbf{S} \mbf{D}$, where $\mbf{S}\in
\mathbb{R}^{d\times D}$ has each column chosen independently and uniformly from 
the $r$ standard basis vectors of $\mathbb{R}^d$ and $\mbf{D}\in \mathbb{R}^{D\times D}$
is a diagonal matrix with diagonal entries chosen independently and uniformly  on $b_i\in\{\pm 1\}$.
%This approach is firstly shown in \cite{CW13} and further investigated in \cite{Meng13}.

Let $\mbf{S}=(\sbf{s}_1,\cdots,\sbf{s}_D)$, such that $\sbf{s}_i$ is a random vector 
taking the vector $e_j$ 
for equal probability, where $e_k$ is the $k$th standard unit vector of $\mathbb{R}^{d}$ 
for $k=1,\cdots,d$. 
Then $\mathbb{E}(\sbf{s}_i)=\frac{1}{d}\sbf{1}_d$ and $\mathbb{E}(\sbf{s}_i^\top\sbf{s}_i)=1$, which follow that
$\mathbb{E}[(\mbf{S}^\top\mbf{S})_{ij}]=\mathbb{E}(\sbf{s}_i^\top\sbf{s}_j)=\mathbb{E}(\sbf{s}_i)^\top \mathbb{E}(\sbf{s}_j)=\frac{1}{d^2}$
for $i\neq j$;
and $\mathbb{E}[(\mbf{S}^\top\mbf{S})_{ii}]=\mathbb{E}(\sbf{s}_i^\top\sbf{s}_i)=1$.
%Let $\mbf{x}=(\sbf{a}_{11},\cdots,\sbf{a}_{1n})^\top$ and $\mbf{x}=(\sbf{a}_{21},\cdots,\sbf{a}_{2n})^\top$,
Obviously, $$\mathbb{E}(\mbf{A}^\top\mbf{A})=\mbf{I}.$$

\end{proof}

\subsection{Proof of Theorem \ref{prop:Variance}}
\label{Proof of Property Variance}
%\textcolor{red}{Unified the symbols of expectation, from E to $\mathbb{E}$.}

\begin{proof}
(1) First, we consider the Gaussian projection. Let $\mbf{A}=(\sbf{l}_1,\cdots,\sbf{l}_d)^\top$, such that $\sbf{l}_i=(\alpha_{i1},\cdots
,\alpha_{iD})^\top$ is a $D\times 1$ vector where each element is from $\mathcal{N}\left(0,\frac{1}{d}\right)$ distribution.
\iffalse %%%%
It is easy to show that, given any vector $\mbf{x}\in\mathcal{X}$ and any vector $\mbf{g}\in\mathbb{R}^D$,
\begin{equation}\label{A-unbiased}
    \mathbb{E}(\mbf{g}^\top\mbf{A}^\top\mbf{A}\mbf{x})=\sum\limits_{i
    =1}^d\mathbb{E}\left(\mbf{g}^\top\sbf{l}_i\sbf{l}_i^\top\mbf{x}\right)=\frac{1}{d}\sum\limits_{i
    =1}^d\left(\mbf{g}^\top\mbf{I}\mbf{x}\right)=\mbf{g}^\top\mbf{x}.
\end{equation}
Thus, we have that,
\fi %%%%
\eqref{A-unbiased} follows that 
\begin{align}\label{moment-proof}
   \mathbb{E}[(\mbf{g}^\top\mbf{A}^\top\mbf{A}\mbf{x}
   -\mbf{g}^\top\mbf{x})^2]
   =& \mathbb{E}[(\mbf{g}^\top\mbf{A}^\top\mbf{A}\mbf{x})^2]
   -(\mbf{g}^\top\mbf{x})^2.
\end{align}
In \eqref{moment-proof}, we have,
\begin{align}\label{E2-A}
&\mathbb{E}\left[(\mbf{g}^\top\mbf{A}^\top\mbf{A}\mbf{x})^2\right]=\mathbb{E}\Big[(\sum\limits_{i=1}^d\mbf{g}^\top\sbf{l}_i\sbf{l}_i^\top\mbf{x})^2\Big]\notag\\
=&\mathbb{E}\Big[\sum\limits_{i=1}^d
(\mbf{g}^\top\sbf{l}_i\sbf{l}_i^\top\mbf{x}\mbf{x}^{\top}\sbf{l}_i\sbf{l}_i^\top\mbf{g}
)+\sum\limits_{i\neq j}(\mbf{g}^\top\sbf{l}_i\sbf{l}_i^\top\mbf{x}
)(\mbf{g}^\top\sbf{l}_j\sbf{l}_j^\top\mbf{x})^\top\Big]\\\notag
=&d \mathbb{E}[(\mbf{g}^\top\sbf{l}_i\sbf{l}_i^\top\mbf{x})(\mbf{x}^\top\sbf{l}_i\sbf{l}_i^\top\mbf{g})]+(d^2-d
)\mathbb{E}(\mbf{g}^\top\sbf{l}_i\sbf{l}_i^\top\mbf{x})\mathbb{E}(\mbf{g}^\top\sbf{l}_j\sbf{l}_j^\top\mbf{x})^\top.
\end{align}
By (\ref{E2-11-GP}) in Lemma \ref{lemma-gp}, 
$$\mathbb{E}[(\mbf{g}^\top\sbf{l}_i\sbf{l}_i^\top\mbf{x})(\mbf{x}^\top\sbf{l}_i\sbf{l}_i^\top\mbf{g})]
=\frac{3}{d^2}\|\mbf{x}\|^2\|\mbf{g}\|^2.
$$

Then we have 
\begin{equation}\label{E2-M-result}
\mathbb{E}\left[(\mbf{g}^\top\mbf{A}^\top\mbf{A}\mbf{x})^2\right]
=(\mbf{g}\top\mbf{x})^2+\frac{3}{d}\|\mbf{x}\|^2\|\mbf{g}\|^2-\frac{1}{d}(\mbf{g}\top\mbf{x})^2.
%\frac{d^2+2d}{d^2}\|\mbf{x}\|^2\|\mbf{g}\|^2.
\end{equation}
Therefore, we have
\begin{equation*}%\label{A4}
\mathbb{E}\left[(\mbf{g}^\top\mbf{A}^\top\mbf{A}\mbf{x}-\mbf{g}^\top\mbf{x}
)^2\right]=
\frac{3}{d}\|\mbf{x}\|^2\|\mbf{g}\|^2-\frac{1}{d}(\mbf{g}\top\mbf{x})^2.
\end{equation*}

%%%%%

%%%%%

(2) Second, we consider the Hashing Random Projection. 
For the Hashing Random Matrix, we rewrite 
 $\mbf{A} = \mbf{S} \mbf{D}$, where $\mbf{S}\in
\mathbb{R}^{d\times D}$ has each column chosen independently and uniformly from 
the $r$ standard basis vectors of $\mathbb{R}^d$ and $\mbf{D}\in \mathbb{R}^{D\times D}$
is a diagonal matrix with diagonal entries chosen independently and uniformly  on $b_i\in\{\pm 1\}$.
%This approach is firstly shown in \cite{CW13} and further investigated in \cite{Meng13}.

Let $\mbf{S}=(\sbf{s}_1,\cdots,\sbf{s}_D)$, such that $\sbf{s}_i$ is a random vector 
taking the vector $e_j$ 
for equal probability, where $e_k$ is the $k$th standard unit vector of $\mathbb{R}^{d}$ 
for $k=1,\cdots,d$. 
Then $\mathbb{E}(\sbf{s}_i)=\frac{1}{d}\sbf{1}_d$ and $\mathbb{E}(\sbf{s}_i^\top\sbf{s}_i)=1$, which follow that
$\mathbb{E}[(\mbf{S}^\top\mbf{S})_{ij}]=\mathbb{E}(\sbf{s}_i^\top\sbf{s}_j)=\mathbb{E}(\sbf{s}_i)^\top \mathbb{E}(\sbf{s}_j)=\frac{1}{d^2}$
for $i\neq j$;
and $\mathbb{E}[(\mbf{S}^\top\mbf{S})_{ii}]=\mathbb{E}(\sbf{s}_i^\top\sbf{s}_i)=1$.
%Let $\mbf{x}=(\sbf{a}_{11},\cdots,\sbf{a}_{1n})^\top$ and $\mbf{x}=(\sbf{a}_{21},\cdots,\sbf{a}_{2n})^\top$,
%Obviously, $$\mathbb{E}(\mbf{A}^\top\mbf{A})=\mbf{I}_D.$$

Applying Lemma \ref{lemma-CW}, 
\begin{align*}%\label{E2-A}
    \mathbb{E}\left[(\mbf{g}^\top\mbf{A}^\top\mbf{A}\mbf{x})^2\right]
    =&(\mbf{g}^\top\mbf{x})^2+\frac{1}{d}\left(\|\mbf{g}\|^2\|\mbf{x}\|^2-\sum\limits_{i=1}^Dx_i^2g_i^2\right).
\end{align*}
Thus, we have, 
$$\mathbb{E}\left[(\mbf{g}^\top\mbf{A}^\top\mbf{A}\mbf{x}-\mbf{g}^\top\mbf{x})^2\right]
 = \frac{1}{d}\left(\|\mbf{g}\|^2\|\mbf{x}\|^2-\sum\limits_{i=1}^Dx_i^2g_i^2\right).
$$

%%%%%

\end{proof}

\iffalse %%%%
\subsection{Proof of Theorem \ref{Theorem of random Gaussian matrix Euclidean distance theorem}}
\label{sec:appG}
\begin{proof}
%Because $\left[\mbf{x}^{\top}(\tilde{\mbf{x}}-\mbf{x})\right]^2\leq \|\mbf{x}\|^2\|\tilde{\mbf{x}}-\mbf{x}\|^2$, we have 
\begin{align*}
P\left(|\mbf{g}^\top(\tilde{\mbf{x}}-\mbf{x})| \leq \epsilon\right) 
%= & P\left(\|\mbf{x}\|^2\|\tilde{\mbf{x}}-\mbf{x}\|^2 \leq \|\mbf{x}\|^2\epsilon\right)\notag\\
%\geq & P\left(\left[\mbf{x}^{\top}(\tilde{\mbf{x}}-\mbf{x})\right]^2 \leq \|\mbf{x}\|^2\epsilon\right)\notag\\
\geq & 1-\frac{3}{d\epsilon^2}\|\mbf{x}\|^2\|\mbf{g}\|^2+\frac{1}{d\epsilon^2}(\mbf{g}\top\mbf{x})^2,
\end{align*}
where the last step is from Theorem \ref{prop:RandomGaussianMatrixVariance} by applying the Markov inequality. %\ref{expectation_properties_of_random_gaussian_matrix} \& 
\end{proof}
\fi %%%%%

%%%%

%%%%%
\iffalse %%%
\subsection{Proof of Theorem \ref{Theorem of Hashing Random Matrix Euclidean distance theorem}}
\label{sec:appH}
\begin{proof}
Similar to the proof of Theorem \ref{Theorem of random Gaussian matrix Euclidean distance theorem}, 
%Because $\left(\mbf{x}^{\top}(\tilde{\mbf{x}}-\mbf{x})\right)^2\leq \|\mbf{x}\|^2\|\tilde{\mbf{x}}-\mbf{x}\|^2$, 
\begin{align*}
%P\left(\|\tilde{\mbf{x}}-\mbf{x}\|^2 \leq \epsilon\right) 
%\geq & P\left(\left[\mbf{x}^{\top}(\tilde{\mbf{x}}-\mbf{x})\right]^2 \leq \|\mbf{x}\|^2\epsilon\right)\notag\\
P\left(|\mbf{g}^\top(\tilde{\mbf{x}}-\mbf{x})| \leq \epsilon\right)
\geq & 1-\frac{1}{d}\left(\|\mbf{g}\|^2\|\mbf{x}\|^2-\sum\limits_{i=1}^Dx_i^2g_i^2\right),
\end{align*}
where the last step is from Theorem \ref{prop:HeMatrixVariance} by applying the Markov inequality. 
\end{proof}
\fi %%%%

%%%%%%%

%\subsubsection{Gaussian Projection}
\subsubsection{Two Lemmas}
\begin{lemma}\label{lemma-gp}
Let $\mbf{A}=(\sbf{l}_1,\cdots,\sbf{l}_d)^{\top}$, such that $\sbf{l}_i=(\alpha_{i1},
\cdots,\alpha_{iD})^{\top}$ is an $D\times1$ vector where each element is from zero mean
distribution with $E(\alpha_{ij}^2) = 1$ and $E(\alpha_{ij}^4)=\gamma$, we have that, for any matrices $\mbf{M}_1\in\mathbb{R}^{D\times d_1}$ and $\mbf{M}_2\in\mathbb{R}^{D\times d_2}$, where $\sbf{m}_{1i}$ and $\sbf{m}_{2i}$ are their $i$-th row, respectively. 

\begin{equation}\label{E2-11}
E\left[(\mbf{M}_1^{\top}\sbf{l}_{i}\sbf{l}_i^{\top}\mbf{M}_2
)(\mbf{M}_2^{\top}\sbf{l}_i\sbf{l}_i^{\top}\mbf{M}_1)\right]=\mbf{M}_1^{\top}[(\gamma-3)\mbf{W}_2 
+ 2\mbf{M}_2\mbf{M}_2^{\top}+ \text{tr}(\mbf{M}_{2}\mbf{M}_{2}^{T})\mbf{I}]\mbf{M}_1,
\end{equation}
where $\mbf{W}_2=\text{diag}\{\sbf{m}_{21}^{\top}\sbf{m}_{21},\cdots,\sbf{m}_{2D}^{\top}\sbf{m}_{2D}\}$.
Particularly, for Gaussian projection,

\begin{equation}\label{E2-11-GP}
E\left[(\mbf{M}_1^{\top}\sbf{l}_{i}\sbf{l}_i^{\top}\mbf{M}_2
)(\mbf{M}_2^{\top}\sbf{l}_i\sbf{l}_i^{\top}\mbf{M}_1)\right]=2\mbf{M}_1^{\top}[\mbf{M}_2\mbf{M}_2^{\top}]\mbf{M}_1 
+ \text{tr}(\mbf{M}_{2}\mbf{M}_{2}^{T})\mbf{M}_{1}^{T}\mbf{M}_{1},
\end{equation}

\end{lemma}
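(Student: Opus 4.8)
The plan is to fix the index $i$ and abbreviate $\sbf{l}:=\sbf{l}_i=(\alpha_1,\dots,\alpha_D)^{\top}$, whose coordinates are independent with $\mathbb{E}[\alpha_j]=0$, $\mathbb{E}[\alpha_j^2]=1$, and $\mathbb{E}[\alpha_j^4]=\gamma$. First I would expand the two bilinear factors through the rows of the $\mbf{M}$'s. Since $\mbf{M}_1^{\top}\sbf{l}=\sum_{j=1}^{D}\alpha_j\sbf{m}_{1j}$ and $\mbf{M}_2^{\top}\sbf{l}=\sum_{k=1}^{D}\alpha_k\sbf{m}_{2k}$, multiplying out the two rank-one-type factors collects everything into a single quadruple sum
\[
(\mbf{M}_1^{\top}\sbf{l}\sbf{l}^{\top}\mbf{M}_2)(\mbf{M}_2^{\top}\sbf{l}\sbf{l}^{\top}\mbf{M}_1)=\sum_{j,k,p,q}\alpha_j\alpha_k\alpha_p\alpha_q\,(\sbf{m}_{2k}^{\top}\sbf{m}_{2p})\,\sbf{m}_{1j}\sbf{m}_{1q}^{\top},
\]
so that after moving the expectation inside, the whole computation reduces to evaluating $\mathbb{E}[\alpha_j\alpha_k\alpha_p\alpha_q]$ for each index tuple.

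The key step, and the only genuinely delicate point, is the fourth-moment identity
\[
\mathbb{E}[\alpha_j\alpha_k\alpha_p\alpha_q]=\delta_{jk}\delta_{pq}+\delta_{jp}\delta_{kq}+\delta_{jq}\delta_{kp}+(\gamma-3)\,\delta_{jkpq},
\]
where $\delta_{jkpq}=1$ exactly when all four indices coincide. I would justify it by a short case check: independence together with $\mathbb{E}[\alpha_j]=0$ annihilates every tuple in which some index is unmatched, including the three-equal case through the zero mean of the lone factor; each of the three pairings contributes a product of unit second moments; and on the full diagonal all three pairings fire to give $3$, whereas the true value is $\gamma$, which is precisely what the correction term $(\gamma-3)\delta_{jkpq}$ records. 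Getting this over-counting bookkeeping right is where the proof must be careful.

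The remainder is routine linear algebra: substitute the identity and collapse the deltas. The pairing $\delta_{jk}\delta_{pq}$ yields $\sum_{j,p}(\sbf{m}_{2j}^{\top}\sbf{m}_{2p})\sbf{m}_{1j}\sbf{m}_{1p}^{\top}=\mbf{M}_1^{\top}\mbf{M}_2\mbf{M}_2^{\top}\mbf{M}_1$, and using the symmetry $\sbf{m}_{2k}^{\top}\sbf{m}_{2j}=\sbf{m}_{2j}^{\top}\sbf{m}_{2k}$ the pairing $\delta_{jp}\delta_{kq}$ relabels to the same matrix, so together they give $2\mbf{M}_1^{\top}\mbf{M}_2\mbf{M}_2^{\top}\mbf{M}_1$. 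The pairing $\delta_{jq}\delta_{kp}$ forces $q=j$, $p=k$ and factorizes as $\big(\sum_k\|\sbf{m}_{2k}\|^2\big)\big(\sum_j\sbf{m}_{1j}\sbf{m}_{1j}^{\top}\big)=\text{tr}(\mbf{M}_2\mbf{M}_2^{\top})\,\mbf{M}_1^{\top}\mbf{M}_1$, while the diagonal term contributes $(\gamma-3)\sum_j\|\sbf{m}_{2j}\|^2\sbf{m}_{1j}\sbf{m}_{1j}^{\top}=(\gamma-3)\mbf{M}_1^{\top}\mbf{W}_2\mbf{M}_1$. Adding the three contributions and factoring out $\mbf{M}_1^{\top}(\cdot)\mbf{M}_1$ reproduces \eqref{E2-11}. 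Finally \eqref{E2-11-GP} is the special case of a standard Gaussian coordinate, for which $\gamma=\mathbb{E}[\alpha^4]=3$ makes the $(\gamma-3)\mbf{W}_2$ term vanish.
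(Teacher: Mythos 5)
Your proposal is correct and follows essentially the same route as the paper's proof: both expand $\mbf{M}_1^{\top}\sbf{l}_i$ and $\mbf{M}_2^{\top}\sbf{l}_i$ over the rows of $\mbf{M}_1$, $\mbf{M}_2$ and reduce the claim to the fourth-moment structure of the independent entries, with the pairings contributing the $2\mbf{M}_1^{\top}\mbf{M}_2\mbf{M}_2^{\top}\mbf{M}_1$ and $\mathrm{tr}(\mbf{M}_2\mbf{M}_2^{\top})\mbf{M}_1^{\top}\mbf{M}_1$ terms and the all-equal case producing the $(\gamma-3)\mbf{M}_1^{\top}\mbf{W}_2\mbf{M}_1$ correction. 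The only difference is bookkeeping: you encode the case analysis in the single identity $\mathbb{E}[\alpha_j\alpha_k\alpha_p\alpha_q]=\delta_{jk}\delta_{pq}+\delta_{jp}\delta_{kq}+\delta_{jq}\delta_{kp}+(\gamma-3)\delta_{jkpq}$ and collapse deltas in one quadruple sum, whereas the paper splits each bilinear factor into its diagonal ($\alpha_{ij}^2$) and off-diagonal parts and evaluates the four resulting block expectations separately --- the same computation, organized differently.
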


\begin{proof}
Since $\mbf{M}_1^{\top}\sbf{l}_i=\sum\limits_{j=1}^D\alpha_{ij}\sbf{m}_{1j}$ and 
$\mbf{M}_2^{\top}\sbf{l}_i=\sum\limits_{j=1}^D\alpha_{ij}\sbf{m}_{2j}$, where $\sbf{m}_{1j}^{\top}$
and $\sbf{m}_{2j}^{\top}$ are the $j$th row of $\mbf{M}_1$ and $\mbf{M}_2$ respectively, it 
follows that,
$$\mbf{M}_1^{\top}\sbf{l}_{i}\sbf{l}_i^{\top}\mbf{M}_2=(\sum\limits_{j=1}^D\alpha_{ij}\sbf{m}_{1j}
)(\sum\limits_{j=1}^D\alpha_{ij}\sbf{m}_{2j})^{\top}=\sum\limits_{j
=1}^n\alpha_{ij}^2\sbf{m}_{1j}\sbf{m}_{2j}^{\top}+\sum\limits_{j_1\neq 
j_2}\alpha_{ij_1}\alpha_{ij_2}\sbf{m}_{1j_1}\sbf{m}_{2j_2}^{\top}.$$
$$\mbf{M}_2^{\top}\sbf{l}_i\sbf{l}_i^{\top}\mbf{M}_1=(\sum\limits_{j=1}^D\alpha_{ij}\sbf{m}_{2j}
)(\sum\limits_{j=1}^D\alpha_{ij}\sbf{m}_{1j})^{\top}=\sum\limits_{j
=1}^n\alpha_{ij}^2\sbf{m}_{2j}\sbf{m}_{1j}^{\top}+\sum\limits_{j_1\neq 
j_2}\alpha_{ij_1}\alpha_{ij_2}\sbf{m}_{2j_1}\sbf{m}_{1j_2}^{\top},$$

Since $E(\alpha_{ij}^4)=\gamma$, so, we have that
\begin{align*}
&E\left(\sum\limits_{j=1}^D\alpha_{ij}^2\sbf{m}_{1j}\sbf{m}_{2j}^{\top}\right)\left(\sum\limits_{j
=1}^n\alpha_{ij}^2\sbf{m}_{2j}\sbf{m}_{1j}^{\top}\right)\notag\\
=&\gamma\sum\limits_{j=1}^D\sbf{m}_{1j}\sbf{m}_{2j}^{\top}\sbf{m}_{2j}\sbf{m}_{1j}^{\top}+\left
(\sum\limits_{j_1\neq j_2}\sbf{m}_{1j_1}\sbf{m}_{2j_1}^{\top}\sbf{m}_{2j_2}\sbf{m}_{1j_2}^{\top}\right),
\end{align*}
%%%
\begin{align*}
&E\left(\sum\limits_{j_1\neq j_2}\alpha_{ij_1}\alpha_{ij_2}\sbf{m}_{1j_1}\sbf{m}_{2j_2}^{\top}\right
)\left(\sum\limits_{j_1\neq j_2}\alpha_{ij_1}\alpha_{ij_2}\sbf{m}_{2j_1}\sbf{m}_{1j_2}^{\top}\right)\notag\\
=&\left(\sum\limits_{j_1\neq j_2}\sbf{m}_{1j_1}\sbf{m}_{2j_2}^{\top}\sbf{m}_{2j_1}\sbf{m}_{1j_2}^{\top}
\right) + \sum\limits_{j_1\neq j_2}\sbf{m}_{1j_{1}}\sbf{m}_{2j_{2}}^{T}\sbf{m}_{2j_{2}}
\sbf{m}_{1j_{1}}^{T},
\end{align*}
$$
E\left(\sum\limits_{j=1}^D\alpha_{ij}^2\sbf{m}_{1j}\sbf{m}_{2j}^{\top}\right)\left
(\sum\limits_{j_1\neq j_2}\alpha_{ij_1}\alpha_{ij_2}\sbf{m}_{2j_1}\sbf{m}_{1j_2}^{\top}\right)=0,
$$
$$
E\left(\sum\limits_{j_1\neq j_2}\alpha_{ij_1}\alpha_{ij_2}\sbf{m}_{1j_1}\sbf{m}_{2j_2}^{\top}
\right)\left(\sum\limits_{j=1}^D\alpha_{ij}^2\sbf{m}_{2j}\sbf{m}_{1j}^{\top}\right)=0.
$$

Combing the four equations above, it is easy to verify that,
\begin{align*}%\label{E2-11}
&E\left[(\mbf{M}_1^{\top}\sbf{l}_{i}\sbf{l}_i^{\top}\mbf{M}_2
)\mbf{M}_2^{\top}\sbf{l}_i\sbf{l}_i^{\top}\mbf{M}_1\right]\notag\\
=&(\gamma-3)\sum\limits_{j
=1}^n\sbf{m}_{1j}\sbf{m}_{2j}^{\top}\sbf{m}_{2j}\sbf{m}_{1j}^{\top} + 
2\mbf{M}_1^{\top}\mbf{M}_2\mbf{M}_2^{\top}\mbf{M}_1 + \text{tr}(\mbf{M}_{2}\mbf{M}_{2}^{T}
)\mbf{M}_{1}^{T}\mbf{M}_{1}.
\end{align*}

\end{proof}

%\iffalse %%%%
%\subsection{Two lemmas}
%\subsubsection{Hashing Projection}

\begin{lemma}\label{lemma-CW}
For the Hashing random projection
\begin{equation*}%\label{E2-CW}
E\left[(\mbf{M}_1^{\top}\mbf{A}^{\top}\mbf{A}\mbf{M}_2)
(\mbf{M}_1^{\top}\mbf{A}^{\top}\mbf{A}
\mbf{M}_2)^{\top}\right]=\mbf{M}_1^{\top}\mbf{M}_2\mbf{M}_2^{\top}\mbf{M}_1 + \frac{1}{d}
\mbf{M}_1^{\top}\mbf{M}_1
\text{tr}(\mbf{M}_2\mbf{M}_2^{\top})- \frac{1}{d}\mbf{M}_{1}^{\top}\mbf{W}_2\mbf{M}_{1}.
\end{equation*}
\end{lemma}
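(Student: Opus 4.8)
The plan is to compute the second moment directly from the factorization $\mbf{A}=\mbf{S}\mbf{D}$ introduced in the proof of Theorem~\ref{expectation_properties}, writing the $j$-th column of $\mbf{A}$ as $b_j\mbf{e}_{h(j)}$, where $h(j)$ is uniform on $\{1,\dots,d\}$, $b_j$ is a Rademacher sign, and all $\{(h(j),b_j)\}_{j=1}^{D}$ are independent. Since $\mbf{M}_1^\top\mbf{A}^\top\mbf{A}\mbf{M}_2=(\mbf{A}\mbf{M}_1)^\top(\mbf{A}\mbf{M}_2)$, expanding over the column index yields the entrywise identity $(\mbf{A}^\top\mbf{A})_{jk}=b_jb_k\,\mbf{1}\{h(j)=h(k)\}$, so that
\[
\mbf{M}_1^\top\mbf{A}^\top\mbf{A}\mbf{M}_2=\sum_{j,k}b_jb_k\,\mbf{1}\{h(j)=h(k)\}\,\sbf{m}_{1j}\sbf{m}_{2k}^\top,
\]
where $\sbf{m}_{1j},\sbf{m}_{2k}$ denote rows of $\mbf{M}_1,\mbf{M}_2$. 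First I would split off the diagonal $j=k$, where $b_j^2=1$ and the indicator is $1$; this contributes exactly $\mbf{M}_1^\top\mbf{M}_2$. Writing the off-diagonal remainder as $\mbf{R}=\sum_{j\neq k}b_jb_k\,\mbf{1}\{h(j)=h(k)\}\,\sbf{m}_{1j}\sbf{m}_{2k}^\top$, the object of interest is $\mbf{P}=\mbf{M}_1^\top\mbf{M}_2+\mbf{R}$.

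Next I would expand $\mathbb{E}[\mbf{P}\mbf{P}^\top]$ into $\mbf{M}_1^\top\mbf{M}_2\mbf{M}_2^\top\mbf{M}_1$, two cross terms, and $\mathbb{E}[\mbf{R}\mbf{R}^\top]$. For $j\neq k$ the signs $b_j,b_k$ are independent, mean-zero, and independent of the hash, so $\mathbb{E}[\mbf{R}]=\mbf{0}$; both cross terms vanish and the leading term is precisely the first summand on the right-hand side. The entire problem thus reduces to evaluating $\mathbb{E}[\mbf{R}\mbf{R}^\top]$.

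The main work, and the main obstacle, is the fourth-order bookkeeping for
\[
\mathbb{E}[\mbf{R}\mbf{R}^\top]=\sum_{j\neq k}\sum_{j'\neq k'}\mathbb{E}\big[b_jb_kb_{j'}b_{k'}\,\mbf{1}\{h(j)=h(k)\}\mbf{1}\{h(j')=h(k')\}\big]\,(\sbf{m}_{2k}^\top\sbf{m}_{2k'})\,\sbf{m}_{1j}\sbf{m}_{1j'}^\top.
\]
Using independence of the signs from the hashes, the sign expectation is nonzero only when the four indices pair up, which under $j\neq k$, $j'\neq k'$ forces either $(j',k')=(j,k)$ or the transposed pairing $(j',k')=(k,j)$; in each surviving case the two collision indicators coincide and contribute $\mathbb{E}[\mbf{1}\{h(j)=h(k)\}]=1/d$. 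I would then reassemble the resulting restricted sums into matrix form via $\sum_{j}\sbf{m}_{1j}\sbf{m}_{1j}^\top=\mbf{M}_1^\top\mbf{M}_1$, $\sum_k\|\sbf{m}_{2k}\|^2=\operatorname{tr}(\mbf{M}_2\mbf{M}_2^\top)$, and $\sum_j\|\sbf{m}_{2j}\|^2\sbf{m}_{1j}\sbf{m}_{1j}^\top=\mbf{M}_1^\top\mbf{W}_2\mbf{M}_1$; completing each $j\neq k$ sum to a full sum is what supplies the $-\tfrac1d\mbf{M}_1^\top\mbf{W}_2\mbf{M}_1$ correction.

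The point I would watch most carefully is the transposed pairing $(j',k')=(k,j)$: it yields inner products $\sbf{m}_{2j}^\top\sbf{m}_{2k}$ coupled to off-diagonal outer products $\sbf{m}_{1j}\sbf{m}_{1k}^\top$, i.e.\ a term of the form $\tfrac1d(\mbf{M}_1^\top\mbf{M}_2\mbf{M}_2^\top\mbf{M}_1-\mbf{M}_1^\top\mbf{W}_2\mbf{M}_1)$, and I would check precisely how this recombines before matching coefficients. To validate the final constants I would specialize to $\mbf{M}_1=\mbf{g}$, $\mbf{M}_2=\mbf{x}$ and compare against Theorem~\ref{prop:Variance}(2), cross-checking with a direct $D=2,\ d=1$ evaluation of $(\mbf{g}^\top\mbf{A}^\top\mbf{A}\mbf{x})^2$, since a mishandled pairing in this step is exactly what would first surface there.
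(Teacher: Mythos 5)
Your plan is, at its core, the same computation as the paper's own proof: both expand $\mbf{A}^\top\mbf{A}$ through the hashing structure (column $j$ of $\mbf{A}$ equal to $b_j\mbf{e}_{h(j)}$, i.e.\ the paper's $\mbf{A}=\mbf{S}\mbf{D}$ factorization), split off the diagonal $j=k$ part, kill the cross terms by the mean-zero signs, and reduce $\mathbb{E}[\mbf{R}\mbf{R}^\top]$ to fourth-order pairing bookkeeping. The difference is that your bookkeeping is right where the paper's is not. The transposed pairing $(j',k')=(k,j)$ that you single out as the step to watch is precisely the term the paper's proof silently drops: in the paper's notation, the off-diagonal-times-off-diagonal contribution is recorded as $E_3=\sum_{i\neq j}E\left[(\sbf{m}_{1i}\sbf{s}_i^\top\sbf{s}_j\sbf{m}_{2j}^\top)(\sbf{m}_{1i}\sbf{s}_i^\top\sbf{s}_j\sbf{m}_{2j}^\top)^\top\right]$ only, with no term pairing $(i,j)$ against $(j,i)$. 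Your analysis of that pairing is correct: the signs give $b_i^2b_j^2=1$, the two collision indicators coincide and contribute $1/d$, and the sum reassembles to $\frac{1}{d}\left(\mbf{M}_1^\top\mbf{M}_2\mbf{M}_2^\top\mbf{M}_1-\mbf{M}_1^\top\mbf{W}_2\mbf{M}_1\right)$, which does not vanish in general.

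Carried to completion, your derivation therefore gives
\begin{equation*}
E\left[(\mbf{M}_1^{\top}\mbf{A}^{\top}\mbf{A}\mbf{M}_2)(\mbf{M}_1^{\top}\mbf{A}^{\top}\mbf{A}\mbf{M}_2)^{\top}\right]
=\left(1+\tfrac{1}{d}\right)\mbf{M}_1^{\top}\mbf{M}_2\mbf{M}_2^{\top}\mbf{M}_1
+\tfrac{1}{d}\,\mbf{M}_1^{\top}\mbf{M}_1\,\text{tr}(\mbf{M}_2\mbf{M}_2^{\top})
-\tfrac{2}{d}\,\mbf{M}_{1}^{\top}\mbf{W}_2\mbf{M}_{1},
\end{equation*}
so the coefficient-matching step you defer cannot succeed against the stated right-hand side: the lemma as printed is missing exactly the transposed-pairing term. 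The sanity check you propose settles this. With $D=2$, $d=1$, $\mbf{M}_1=\mbf{g}$, $\mbf{M}_2=\mbf{x}$, one has $\mbf{g}^\top\mbf{A}^\top\mbf{A}\mbf{x}=\mbf{g}^\top\mbf{x}+b_1b_2(g_1x_2+g_2x_1)$, hence $E[(\mbf{g}^\top\mbf{A}^\top\mbf{A}\mbf{x})^2]=(\mbf{g}^\top\mbf{x})^2+(g_1x_2+g_2x_1)^2$, whereas the lemma predicts $(\mbf{g}^\top\mbf{x})^2+g_1^2x_2^2+g_2^2x_1^2$, short by the cross term $2g_1g_2x_1x_2$. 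This propagates: the bound in Theorem~\ref{prop:Variance}(2), which is derived from this lemma with equality, fails for instance at $\mbf{g}=\mbf{x}=(1,1)^\top$, $d=1$, where the true variance is $4$ but the claimed bound is $2$; the correct identity carries the extra term $\frac{1}{d}\left((\mbf{g}^\top\mbf{x})^2-\sum_{i=1}^D g_i^2x_i^2\right)$. In short, there is no gap in your proposal; completing it exposes a genuine error in the paper's proof and statement.
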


\begin{proof}
For the Hashing Random Matrix projection, we rewrite 
 $\mbf{A} = \mbf{S} \mbf{D}$, where $\mbf{S}\in
\mathbb{R}^{d\times D}$ has each column chosen independently and uniformly from 
the $r$ standard basis vectors of $\mathbb{R}^d$ and $\mbf{D}\in \mathbb{R}^{D\times D}$
is a diagonal matrix with diagonal entries chosen independently and uniformly  on $b_i\in\{\pm 1\}$.

Let $\mbf{S}=(\sbf{s}_1,\cdots,\sbf{s}_n)$, such that $\sbf{s}_i$ is a random vector 
taking the vector $e_j$ 
for equal probability, where $e_j$ is the $j$th standard unit vector of $\mathbb{R}^{d}$ 
for $j=1,\cdots,r$. 
Then $E(\sbf{s}_i)=\frac{1}{d}\sbf{1}_d$ and $E(\sbf{s}_i^{\top}\sbf{s}_i)=1$, which follow that
$E[(\mbf{S}^{\top}\mbf{S})_{ij}]=E(\sbf{s}_i^{\top}\sbf{s}_j)=E(\sbf{s}_i)^{\top}E(\sbf{s}_j)=\frac{1}{d^2}$
for $i\neq j$;
and $E[(\mbf{S}^{\top}\mbf{S})_{ii}]=E(\sbf{s}_i^{\top}\sbf{s}_i)=1$.
%Let $\mbf{M}_1=(\sbf{m}_{11},\cdots,\sbf{m}_{1n})^{\top}$ and $\mbf{M}_2=(\sbf{m}_{21},\cdots,\sbf{m}_{2n})^{\top}$,
We have that,
\begin{align}\label{A12}
\mbf{M}_1^{\top}\mbf{A}^{\top}\mbf{A}\mbf{M}_2&=(\sum\limits_{i=1}^D
b_i\sbf{s}_i\sbf{m}_{1i}^{\top})^{\top}(\sum
\limits_{i=1}^D b_i\sbf{s}_i\sbf{m}_{2i}^{\top})\notag\\
&=\sum\limits_{i=1}^D b_i^2\sbf{m}_{1i}\sbf{s}_i^{\top}\sbf{s}_i\sbf{m}_{2i}^{\top}+\sum\limits
_{i\neq j}
b_ib_j\sbf{m}_{1i}\sbf{s}_i^{\top}\sbf{s}_j\sbf{m}_{2j}^{\top}
\end{align}

From \eqref{A12}, we have,
\begin{align*}%\label{E2-A}
    &E\left[\mbf{M}_1^{\top}\mbf{A}^{\top}\mbf{A}\mbf{M}_2\right]\left[\mbf{M}_1^{\top}
    \mbf{A}^{\top}
    \mbf{A}\mbf{M}_2\right]^{\top}\notag\\
    =&E\left(\sum\limits_{i=1}^D b_i^2\sbf{m}_{1i}\sbf{s}_i^{\top}\sbf{s}_i\sbf{m}_{2i}^{\top}+\sum
    \limits_{i\neq j} b_ib_j\sbf{m}_{1i}\sbf{s}_i^{\top}\sbf{s}_j\sbf{m}_{2j}^{\top}\right)\left(\sum
    \limits_{i=1}^D b_i^2\sbf{m}_{1i}\sbf{s}_i^{\top}\sbf{s}_i\sbf{m}_{2i}^{\top}+\sum\limits_{i\neq j}
    b_ib_j\sbf{m}_{1i}\sbf{s}_i^{\top}\sbf{s}_j\sbf{m}_{2j}^{\top}\right)^{\top}\notag\\
   =&\sum\limits_{i=1}^D E(\sbf{m}_{1i}\sbf{s}_i^{\top}\sbf{s}_i\sbf{m}_{2i}
   ^{\top})(\sbf{m}_{1i}\sbf{s}_i^{\top}\sbf{s}_i\sbf{m}_{2i}^{\top})^{\top}+\sum\limits_{i\neq j}
   E(\sbf{m}_{1i}\sbf{s}_i^{\top}\sbf{s}_i\sbf{m}_{2i}^{\top})
   (\sbf{m}_{1j}\sbf{s}_j^{\top}\sbf{s}_j\sbf{m}_{2j}^{\top})^{\top}\\\notag
 & +\sum\limits_{i\neq j} E(\sbf{m}_{1i}\sbf{s}_i^{\top}\sbf{s}_j\sbf{m}_{2j}^{\top})
 (\sbf{m}_{1i}\sbf{s}_i^{\top}\sbf{s}_j\sbf{m}_{2j}^{\top})^{\top}
=:E_1+E_2+E_3.
\end{align*}
Specifically, we have that

\begin{align}\label{E}
&E_1=\sum\limits_{i=1}^D E(\sbf{m}_{1i}\sbf{s}_i^{\top}\sbf{s}_i\sbf{m}_{2i}^{\top})
(\sbf{m}_{1i}\sbf{s}_i^{\top}\sbf{s}_i\sbf{m}_{2i}^{\top})^{\top}=\sum\limits_{i=1}^D \sum\limits_{k=1}^d
\frac{1}{d}
(\sbf{m}_{1i}\mbf{e}_k^{\top}\mbf{e}_k\sbf{m}_{2i}^{\top})
(\sbf{m}_{1i}\mbf{e}_k^{\top}\mbf{e}_k\sbf{m}_{2i}^{\top})^{\top}\notag\\
&\ \ =\sum\limits_{i=1}^D (\sbf{m}_{1i}\sbf{m}_{2i}^{\top})(\sbf{m}_{1i}\sbf{m}_{2i}^{\top})^{\top}.\notag\\
&E_2=\sum\limits_{i\neq j} E(\sbf{m}_{1i}\sbf{s}_i^{\top}\sbf{s}_i\sbf{m}_{2i}^{\top})
(\sbf{m}_{1j}\sbf{s}_j^{\top}\sbf{s}_j\sbf{m}_{2j}^{\top})^{\top}=\sum\limits_{i\neq j} E
(\sbf{m}_{1i}\sbf{s}_i^{\top}\sbf{s}_i\sbf{m}_{2i}^{\top})
E(\sbf{m}_{1j}\sbf{s}_j^{\top}\sbf{s}_j\sbf{m}_{2j}^{\top})^{\top}\notag\\
&\ \ =\sum\limits_{i\neq j} (\sbf{m}_{1i}\sbf{m}_{2i}^{\top})(\sbf{m}_{1j}\sbf{m}_{2j}^{\top})^{\top}\notag\\
&E_3=\sum\limits_{i\neq j} E(\sbf{m}_{1i}\sbf{s}_i^{\top}\sbf{s}_j\sbf{m}_{2j}^{\top})
(\sbf{m}_{1i}\sbf{s}_i^{\top}\sbf{s}_j\sbf{m}_{2j}^{\top})^{\top}=\sum\limits_{i\neq j}\sbf{m}_{1i}E
(\sbf{s}_{i}^{\top}\sbf{s}_{j}\sbf{m}_{2j}^{\top}\sbf{m}_{2j}\sbf{s}_{j}^{T}\sbf{s}_{i})
\sbf{m}_{1i}^{T}\notag\\&\ \ = 
\sum\limits_{i\neq j}\sbf{m}_{1i}\sbf{m}_{2j}^{\top}\sbf{m}_{2j}
E(\sbf{s}_{i}^{\top}\sbf{s}_{j}\sbf{s}_{j}^{T}\sbf{s}_{i}
)\sbf{m}_{1i}^{T} = \sum\limits_{i\neq j}\sbf{m}_{1i}\sbf{m}_{2j}^{\top}\sbf{m}_{2j}
E(\sbf{s}_{i}^{\top}\sbf{s}_{j}
)^2\sbf{m}_{1i}^{T} \notag\\
&= \frac{1}{d}\sum\limits_{i\neq j}
\sbf{m}_{1i}\sbf{m}_{2j}^{\top}\sbf{m}_{2j}\sbf{m}_{1i}^{T}\notag\\
&\ \ = \frac{1}{d} \mbf{M}_1^{\top}\mbf{M}_1\text{tr}(\mbf{M}_2\mbf{M}_2^{\top}) - 
\frac{1}{d}\mbf{M}_{1}^{\top}\mbf{W}_2\mbf{M}_{1}
\end{align}
where $\mbf{W}_2 = \mathrm{diag}\{\sbf{m}_{21}^{\top}\sbf{m}_{21},\cdots, 
\sbf{m}_{2n}^{\top}\sbf{m}_{2n}\}$.
Thus, we have, 
$$E\left[(\mbf{M}_1^{\top}\mbf{A}^{\top}\mbf{A}\mbf{M}_2
 )(\mbf{M}_1^{\top}\mbf{A}^{\top}\mbf{A}\mbf{M}_2)^{\top}\right]
 =\mbf{M}_1^{\top}\mbf{M}_2\mbf{M}_2^{\top}\mbf{M}_1 + 
 \frac{1}{d} \mbf{M}_1^{\top}\mbf{M}_1\text{tr}(\mbf{M}_2\mbf{M}_2^{\top})-
 \frac{1}{d}\mbf{M}_{1}^{\top}\mbf{W}_2\mbf{M}_{1}.$$
\end{proof}
%\fi %%%%%

\subsection{An experiment with a higher dimension}
\label{app:d1000}

To assess the performance of the CEPBO algorithm in higher dimensions, 
we conducted simulations using the well-known Hartmann function. 
Specifically, we utilized the Hartmann function with an original space dimension of $D=1000$ and set the embedded space dimension to $d=6$ as well.

\begin{figure}[ht]
    \centering
    \includegraphics[width=1\textwidth]{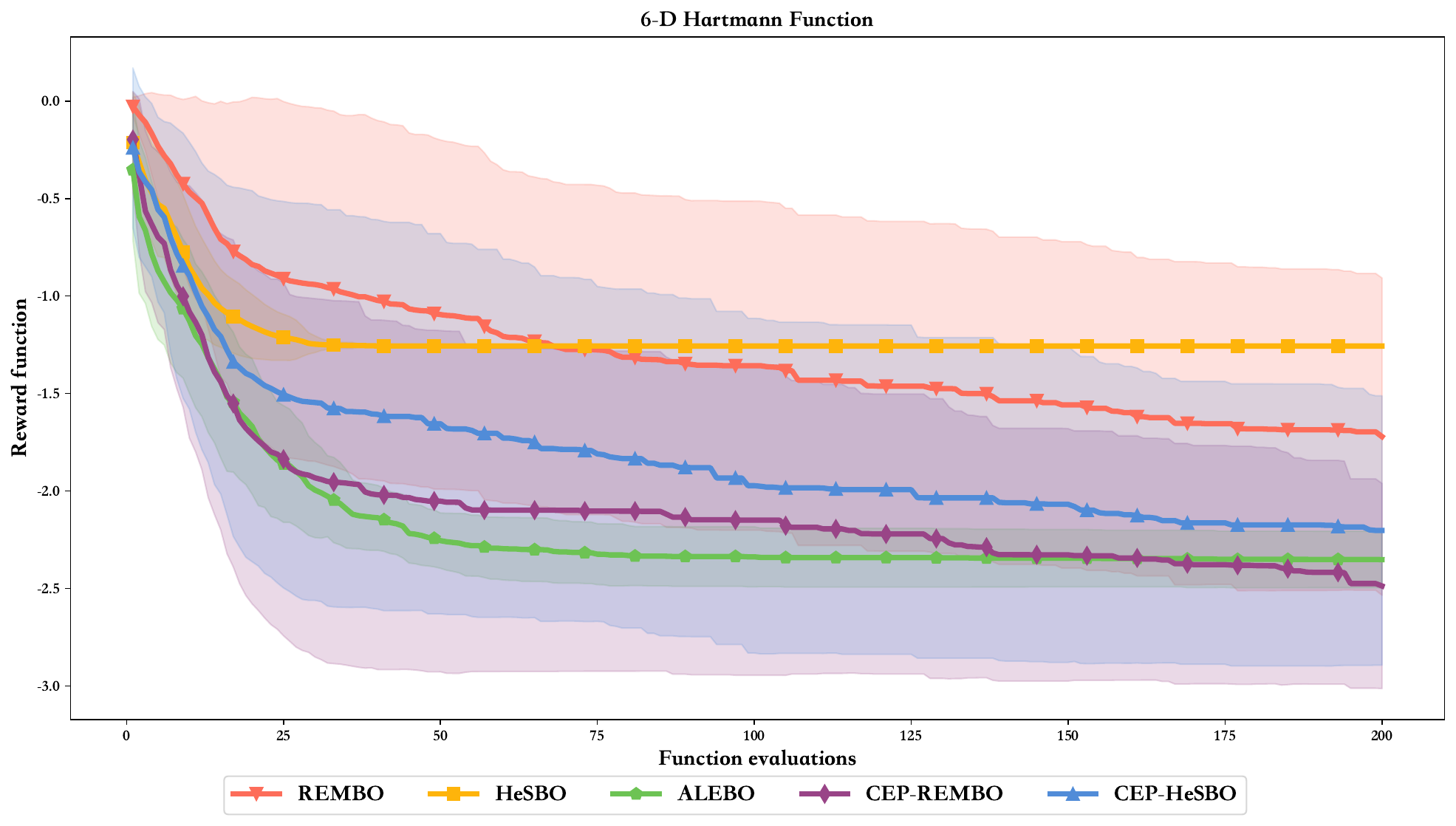}
    \caption{The results of the optimization experiments for Hartmann function.
    }
    \label{fig:hartmann}
\end{figure}
The results are reported in Figure \ref{fig:hartmann}. In this setup, ALEBO, REMBO, and HeSBO were identified as the best-performing configurations. 
The experimental results demonstrate that our proposed algorithm still maintains a certain level of superiority, with the CEP-REMBO algorithm being the optimal one. Additionally, the results indicate that incorporating the CEP projection mechanism can significantly improve the performance of both REMBO and HeSBO algorithms.

\subsection{Details of machine information}
The entire experiment in this paper is programmed in Python and run under the Linux system, using open-source Bayesian optimization libraries such as Ax\cite{bakshy2018ae} and BoTorch\cite{balandat2020botorch} for assistance. The experimental equipment is equipped with 2 AMD EPYC 7601 processors, each with 32 cores and a base clock frequency of 2.2GHz. The experiment is conducted in the form of parallel processing, with different independent repeating experiments distributed to different CPU cores for acceleration. In addition, the system has a memory capacity of 768GB, providing sufficient memory space for large-scale data processing and complex algorithm operation.

\subsection{Modified Schwefel function and Griewank function}
\label{Modified Schwefel function and Griewank function}
To investigate the efficacy of the proposed method within the complex context of high-dimensional optimization problems that entail numerous local minima, we applied a Schwefel function of $D=100$ and a Griewank function of $D=100$.
In the context of the Schwefel function and Griewank function, every dimension qualifies as an effective dimension, hence $d_e=D=100$.
We increased the optimization challenge by altering the Schwefel function and Griewank function. This alteration involved the adjustment of positions within different dimensions where minimum values are attained, thereby making the optimization of the Schwefel function and Griewank function more challenging. 
The modified Schwefel function is:
\begin{equation}
  f(\bsx)=\sum_{i=1}^D \frac{x_i^2}{4000}-\prod_{i=1}^d \cos \left(\frac{x_i}{\sqrt{i}}\right)+1 ,
\end{equation}
Where $b_i \sim \mathcal{N}(0,1)$. We maintain the constancy of $b_i$ values across different independent repeated experiments, while allowing $b_i$ values to vary across different dimensions.

The modified Griewank function can be expressed as:
\begin{equation}
  f(\bsx)=\sum_{i=1}^D w_i(x_i - b_i)^2 -\prod_{i=1}^d \cos \left(\frac{x_i}{\sqrt{i}}\right),
\end{equation}
where $w_i\sim \mathcal{N}(0,1)$ and $b_i\sim \mathcal{N}(0,1)$. We ensure that the values of $w_i$ and $b_i$ remain consistent within various independent repeat experiments, while differing across the several dimensions.

\subsection{Details about Real-World Problems}
\subsubsection{Lunar landing}
In this experiment, our goal is to learn a strategy that controls the lunar lander, so that the lunar lander can minimize fuel consumption and distance from the landing target, while avoiding crashes. This optimization task was proposed by Eriksson\cite{eriksson2019scalable}. The simulation environment of the control task is implemented through OpenAI gym \footnote{www.gymlibrary.dev/environments/box2d/lunar\_lander/}. The state space of the lunar lander includes its coordinates $x$ and $y$, linear velocities $x_v$ and $y_v$, its angle, its angular velocity, and two boolean values indicating whether each leg is in contact with the ground. At any moment, the current controller state can be represented with an $8$-dimensional vector. After obtaining the state vector, the controller can choose one of four possible actions, corresponding to pushing the thrusters left, right, up or none. In the experiment, it can be considered as a $D=12$ optimization problem. Once the parameters are determined, the corresponding rewards can be obtained through in-game feedback. If the lander deviates from the landing pad, it loses rewards. If the lander crashes, it gets an extra $-100$ points. If it successfully controls the lander to stop, it will get an extra $+100$ points. Each leg touching the ground gets $+10$ points. Igniting the main engine gets $-0.3$ points per frame. Each frame starts side engine for $-0.03$ points. The goal of the control task optimization is to maximize the average final reward on a fixed set of 50 randomly generated terrains, initial positions, and speed combinations. We observe that even minor perturbations can have an impact on the simulation.

\subsubsection{Robot pushing}
This paper follows the experimental setup of Wang\cite{wang2017batched}, Eriksson\cite{eriksson2019scalable} et al., and also realizes the simulation of using two robot arms to push two objects in the Box $2 \mathrm{D}$\cite{cattobox2d} physics engine. In the simulation environment, the parameters of the robot arms are simulated to push two objects, and the trajectories of the object movements are recorded at the same time. A total of 14 parameters are used by the two robot arms, which respectively specify the position and rotation of the robot hands, the pushing speed, the moving direction, and the pushing time. The lower bounds for these parameters are $$[-5,-5,-10,-10,2,0,-5,-5,-10,-10,2,0,-5,-5],$$ and the upper bounds are $$[5,5,10,10,30,2 \pi, 5,5,10,10,30,2 \pi, 5,5].$$ The initial positions of the objects are designated as $s_{i 0}$ and $s_{i 1}$, and the end positions as $s_{e 0}$ and $s_{e 1}$. The target positions of the two objects are indicated by $s_{g 0}$ and $s_{g 1}$. The reward is defined as $$r=\left|s_{g 0}-s_{i 0}\right|+\left|s_{g 1}-s_{i 1}\right|-\left|s_{g 0}-s_{e 0}\right|-\left|s_{g 1}-s_{e 1}\right|,$$ namely the distance by which the objects move towards their target positions.

\subsubsection{NAS}
In this paper, referring to the settings of Letham\cite{letham2020re} and others, by parameterizing operations and edges respectively, the optimal architecture search problem in NASBench-101 is set as a continuous high-dimensional Bayesian optimization problem.
Specifically, $L$ different operations are represented by one-hot encoding.

Since two of the seven nodes are fixed as input and output nodes, the remaining five optional nodes, each node corresponds to three different operations, which generate a total of 15  different parameters.
We optimize these parameters in the continuous $[0, 1]$ space.
For each node, we take the "operation" corresponding to the maximum value of the three operations under that node as the "operation" adopted by that node, and use one-hot encoding to represent the specific "operation" used under that node.

Since NASBench-101 uses a $7 \times 7$ upper triangular adjacency matrix to represent edges, it generates a total of $\frac{7\cdot6}{2}=21$ possible edges.
And the five optional vertices can have three different operations, so under this encoding there are about $2^{21} \cdot 3^5 \approx 510M$ unique models,
After removing a large number of unreasonable input and output models and models with more than 9 edges, the search space still has about $423k$ unique models.
We convert these 21 possible edges into 21 binary parameters that are similarly optimized in a continuous $[0, 1]$ space.
We rank the continuous values corresponding to these 21 binary parameters and create an empty adjacency matrix.
Then, we add edges to the adjacency matrix in the percentile order of the 21 binary parameters iteratively, while pruning parts that are not connected to the input or output nodes, until reaching the limit of 9 edges.
Finally, the combination of adjacency matrix parameters (21) and one-hot encoded "operation" parameters (15) constitutes a 36-dimensional optimization space.
The Bayesian optimization algorithm only needs to be optimized in a high-dimensional space with $D=36$, and the boundary constraint is $[-1,1]^{36}$.
Each vector $\bsx \in \mathbb{R}^{36}$ can be decoded into a DAG and lookup evaluated in NASBench-101.

\subsubsection{Rover planning}
To explore the performance of the proposed method in complex high-dimensional optimization scenarios, we considered a two-dimensional trajectory optimization task aimed at simulating detector navigation missions. This optimization task was proposed by Wang\cite{wang2018batched}, and the experimental setup by Wang\cite{wang2018batched} was continued to be used here, with the optimization objective being to maximize the reward function. The problem instance is described by defining the starting position $s$, the target position $g$, and a cost function on the state space. The goal of the problem is to optimize the detector's trajectory on rugged terrain. The trajectory consists of a set of points on a two-dimensional plane, and there are 30 points in this instance, which can be fitted into a B-spline curve, so it is considered a high-dimensional optimization problem with $D=60$. Through a set of trajectories, $\boldsymbol{x} \in[0,1]^{60}$, and a specific cost function, we can calculate the cost of a trajectory $c(\boldsymbol{x})$.

The reward for this problem is defined as 
$$f(\boldsymbol{x})=c(\boldsymbol{x})+\lambda\left(\left|\boldsymbol{x}_{0,1}-s\right|1+\left|\boldsymbol{x}_{59,60}-g\right|_1\right)+b.$$

The reward function is non-smooth, discontinuous, and concave. The four input dimensions involved in the reward function respectively represent the starting and target positions of the trajectory. Set $\lambda=-10, b=5$, any collision with objects along the trajectory will incur a penalty of $-20$, which is the collision cost of the trajectory. Thus, in addition to penalties in the reward function caused by collisions, adverse deviations from the trajectory's starting point will also incur additional penalties.

%\newpage
%\input{chapters/checklist}

%\newpage
%\input{chapters/rebuttal}

\end{document}